\newtheorem{myDef}{Definition}
\newtheorem{theorem}{Theorem}
\newtheorem{lemma}{Lemma}
\begin{document}

\title{DiffLoad: Uncertainty Quantification in Electrical \\Load Forecasting with the Diffusion Model}

\author{Zhixian Wang, Qingsong Wen, Chaoli Zhang, Liang Sun, and Yi Wang
\thanks{The work was supported in part by the National Key R\&D Program of China (2022YFB2403300), in part by the Applied Basic Research Foundation (2024A1515011266), and in part by the Young Elite Scientists Sponsorship Program by CAST. (\textit{Corresponding author: Yi Wang})}
\thanks{Zhixian Wang and Yi Wang  are with the Department of Electrical and Electronic Engineering, The University of Hong Kong, Hong Kong SAR, China (e-mail: zxwang@eee.hku.hk, yiwang@eee.hku.hk).}
\thanks{Qingsong Wen and Liang Sun are with the DAMO Academy, Alibaba Group (U.S.) Inc., Bellevue, WA 98004, USA (e-mail: qingsong.wen@alibaba-inc.com, liang.sun@alibaba-inc.com).}
\thanks{Chaoli Zhang is with the DAMO Academy, Alibaba Group, Hangzhou, China (e-mail: chaoli.zcl@alibaba-inc.com).}
}

\markboth{Submitted to IEEE Transactions on Power Systems}%
{Shell \MakeLowercase{\textit{et al.}}: A Sample Article Using IEEEtran.cls for IEEE Journals}

\maketitle

\begin{abstract}
Electrical load forecasting plays a crucial role in decision-making for power systems. The integration of renewable energy sources and the occurrence of external events, such as the COVID-19 pandemic, have rapidly increased uncertainties in load forecasting. The uncertainties in load forecasting can be divided into two types: epistemic uncertainty and aleatoric uncertainty. Modeling these types of uncertainties can help decision-makers better understand where and to what extent the uncertainty is, thereby enhancing their confidence in the following decision-making. This paper proposes a diffusion-based Seq2seq structure to estimate epistemic uncertainty and employs the robust additive Cauchy distribution to estimate aleatoric uncertainty. Our method not only ensures the accuracy of load forecasting but also demonstrates the ability to separate and model the two types of uncertainties for different levels of loads. The relevant code can be found at \url{https://github.com/hkuedl/DiffLoad-Uncertainty-Quantification-Load-Forecasting}.
\end{abstract}

\begin{IEEEkeywords}
Generative diffusion model, Load forecasting, Uncertainty quantification
\end{IEEEkeywords}

\section{Introduction}

\subsection{Background and Motivation}\label{motivation}
Electrical load forecasting is vital in helping power system decision-making. In recent years, Neural Network (NN)-based load forecasting methods have been widely applied~\cite{hammad2020methods,zhu2023energy}. However, the uncertainties in NN-based load forecasting may reduce decision-makers' trust in the forecasts. The uncertainties can be divided into two parts: epistemic uncertainty and aleatoric uncertainty. For load forecasting, quantifying these two types of uncertainties corresponds to different scenarios and has different applications. Quantifying aleatoric uncertainty, what most probabilistic load forecasting does, can leave a margin for ordinary power grid dispatching. Epistemic uncertainty, on the other hand, quantifies whether the models understand the data well. When the power system suffers from external shocks such as the COVID-19 epidemic (shown in Fig. \ref{COV}) and extreme weather, the load data may shift and forecasting errors may increase rapidly. Quantifying this uncertainty can help downstream task decision-makers understand the relevant risks to reduce economic losses. Therefore, to increase decision-makers' confidence in forecasts, the model should be able to model/quantify these uncertainties.

\begin{figure}[t]
\centering
\includegraphics[width=0.5\textwidth]{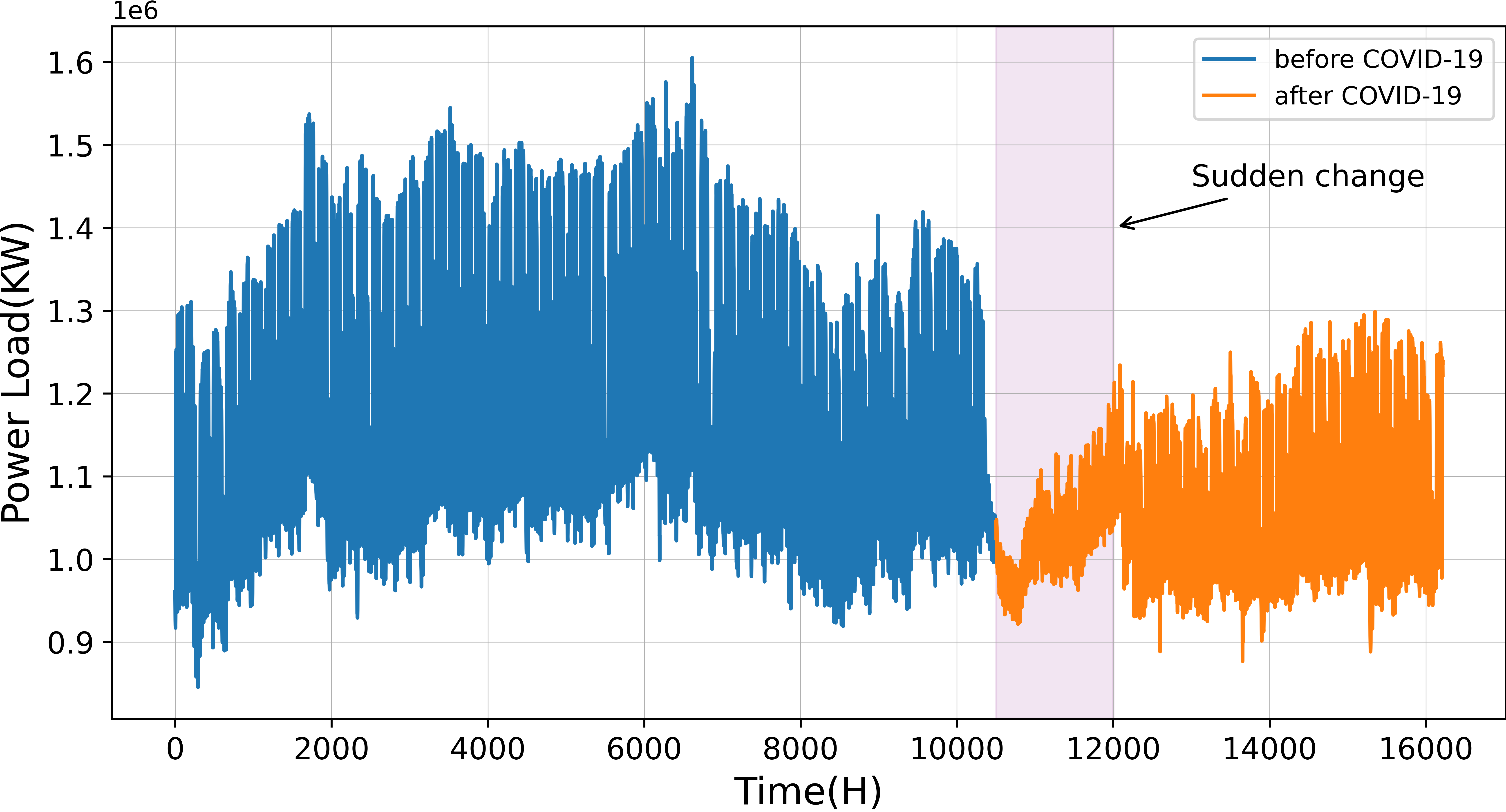}
\centering
\caption{Visualization of sudden change effects caused by external events in electricity load}
\label{COV}
\end{figure}

\subsection{Literautre Review}
Extensive work has been done on load forecasting, which can be roughly divided into two categories. The first is statistics-based methods, which explicitly construct the relationship between input data (such as historical load) and the load to be forecasted. \cite{vaghefi2014modeling} adopted temperature as an exogenous variable and built autoregressive moving average models with exogenous inputs (ARMAX) to forecast load. The second is machine learning-based methods, which learn latent patterns from known data and apply them to unknown data. Traditional machine learning methods such as linear regression and tree models are widely applied. \cite{lindberg2019modelling} used time, external variables (such as temperature), and day type (holiday or not) to construct a linear regression model. \cite{wang2021short} decomposed the load series into trend series and multiple fluctuation subsequences and then constructed several linear regression models and Xgboost regression models to forecast each series separately. Due to the relatively simple model structure, traditional machine learning methods can usually explain how input variables affect the forecasts well. However, the simple structure often can not cope with strong nonlinearity in reality. To this end, researchers have gradually turned their attention to NN in recent years~\cite{almalaq2017review,zhou2023robust,grabner2023global}. The Long Short Term Memory (LSTM) recurrent NN was applied to model the power load of individual electric customers~\cite{kong2017short}, demonstrating the superiority of NN in load forecasting tasks. In addition to using existing NN, researchers also considered modifying NN for load forecasting. \cite{li2021attention} proposed an NN architecture with an attention mechanism for developing RNN-based building energy forecasting and investigated the effectiveness of this attention mechanism in improving interpretability. \cite{jiao2021adaptive} designed a method to handle external weather and calendar variables jointly and then combined them with LSTM.

Although NN is widely used in load forecasting, various issues like data noise and unknown external influences (such as COVID-19) will make it difficult to model the load data well, therefore bringing huge uncertainties. This makes researchers shift their focus from point forecasting to probabilistic forecasting, which essentially models the uncertainties in forecasting~\cite{journel1994modeling}. \cite{wang2019probabilistic} proposed to use Pinball Loss to guide the LSTM so that it can output the quantile of the data. In \cite{yang2019deep}, a novel deep ensemble learning-based probabilistic load forecasting framework was proposed to quantify the load uncertainties of individual customers. \cite{xu2019probabilistic} considered peak areas with often more significant uncertainty and supposed that the load data consists of probabilistic normal load and the probabilistic peak abnormal differential load. Apart from the methods mentioned above, a recent trend is to employ the diffusion mechanism to model the uncertainty. TimeGrad \cite{rasul2021autoregressive} and \cite{shen2023non} used NN to extract information from time series data and assist in constructing Markov Chains, ultimately obtaining uncertainty estimations that do not require hypothetical distributions.

Although these practices can provide probabilistic forecasts to capture uncertainties, they did not clearly define what uncertainty they were modeling and therefore can not provide further insights into the forecasting process. \cite{kendall2017uncertainties} claimed that the uncertainty of NN forecasting models can be categorized into epistemic uncertainty caused by the forecasting model and aleatoric uncertainty caused by the data itself. The existing common methods, such as deepAR~\cite{salinas2020deepar}, are probabilistic modeling of forecasting errors, considering the aleatoric uncertainty. In recent years, with the gradual increase in the penetration rate of renewable energy generation, the randomness of meteorological factors has affected renewable energy generation, further affecting the uncertainty of load forecasting~\cite{sun2019using}. The significance of this type of probabilistic load forecasting is mainly to leave a margin for ordinary power grid dispatching~\cite{nosair2017economic}. Epistemic uncertainties, on the other hand, arise from
incomplete information in the training set ~\cite{rawal2023load}. When faced with deviations in the distribution of training and test data, the existing method usually can not provide a reasonable probabilistic forecasting interval, which may lead to a huge economic loss, because it ignores the epistemic uncertainty. In recent years, the impact of extreme external events such as COVID-19 on load forecasting and downstream power grid dispatching tasks has received increasing attention~\cite{zhao2023gaussian}, indicating that we not only need to consider the impact of aleatoric uncertainty caused by data noise on normal situations but also the impact of epistemic uncertainty caused by external events. For now, some methods have been proposed to deal with different types of uncertainties, such as Bayesian methods, ensemble methods, and dropout. Bayesian methods assumed that different types of uncertainty followed Gaussian distributions. \cite{yang2019bayesian} proposed a novel probabilistic day-ahead net load forecasting method to capture epistemic and aleatoric uncertainty using Bayesian deep learning. Similarly, \cite{sun2019using} also applied Bayesian NN to capture two types of uncertainties, and the results were used in subsequent pooling clustering, ultimately improving forecasting accuracy. In addition to the Bayesian NN, dropout~\cite{gal2016dropout}, a common technique for training NN, has also been proved to be an approximation of the Bayesian network, thus providing two types of uncertainty estimates. Because epistemic uncertainty represents model uncertainty during the training process, ensemble methods~\cite{lakshminarayanan2017simple} have become one of the methods for estimating model uncertainty. However, these methods all had their drawbacks. For the ensemble approach, the time and computational costs were very expensive due to the need to train multiple models. Similarly, the Bayesian method treated each NN parameter as a random variable, making the training cost extremely expensive. Meanwhile, these two methods typically relied on Gaussian distributions, which limited the model's expressive power and was easily affected by data noise \cite{huber2011robust}. As for dropout, the advantage of this method was that it did not require assumptions about the distribution of uncertainty, and compared to ensemble-based and Bayesian methods, it reduced the computational time. However, it has been proven that its forecasting performance was unstable due to inconsistencies in the training and testing processes~\cite{li2019understanding}.  
\subsection{Contributions}
Given that existing methods require significant computational resources or are susceptible to issues like data noise because of the Gaussian distribution, we are motivated to develop a new uncertainty quantification framework. The framework can estimate and separate the two kinds of uncertainties while reducing distribution assumptions and does not significantly increase the computational burden. To estimate the aleatoric uncertainty, we propose to apply a heavy-tailed emission head to wrap up the forecasting model, reducing the bad effect caused by data noise. As for the epistemic uncertainty, we propose a diffusion-based framework to concentrate the uncertainty of the model on the hidden state, which only increases the computational burden that can be borne. Combining two types of uncertainties, our forecasting model will provide high-quality load forecasts. This paper makes the following contributions:
\begin{itemize}
\item Provide a new epistemic uncertainty quantification framework in electrical load forecasting: Based on sequence-to-sequence (Seq2Seq) and diffusion structure, we propose a new uncertainty quantification method for NN forecasting models. Unlike previous methods that set model parameters to random variables like Bayesian NN and dropout, we utilize a diffusion-based encoder to concentrate uncertainty in the hidden layer of the NN before inputting it into the decoder. In this way, our method provides estimations of model uncertainty while bringing an affordable additional computational burden.

\item Propose a robust emission head to capture the aleatoric uncertainties: A likelihood model based on the additive Cauchy distribution is proposed to estimate the uncertainty of the data. Compared with the traditional Gaussian likelihood, the Cauchy distribution is more robust to the outlier and extreme values of the power load data.

\item Conduct extensive experiments including load data at different levels: We compared our methods with the widely used uncertainty quantification method for NN at different levels of load data. The experiment result shows that our method outperforms traditional methods at both system-level and building-level loads without adding a significant computational burden. The experiment's code can be found in \url{https://github.com/hkuedl/DiffLoad-Uncertainty-Quantification-Load-Forecasting}.
\end{itemize}

\subsection{Paper Organization}
The rest of the paper is structured as follows. Section \ref{Proposed Method} elaborates our proposed method, including how to separate two types of uncertainty using diffusion structure and robust Cauchy distribution. Section \ref{Example analysis} reports experimental results and analysis. Section \ref{Conclusion} gives conclusions and directions for further research.

\section{Proposed Method}\label{Proposed Method}
\subsection{Framework}

\begin{figure}[t]
\centering
\includegraphics[width=0.5\textwidth]{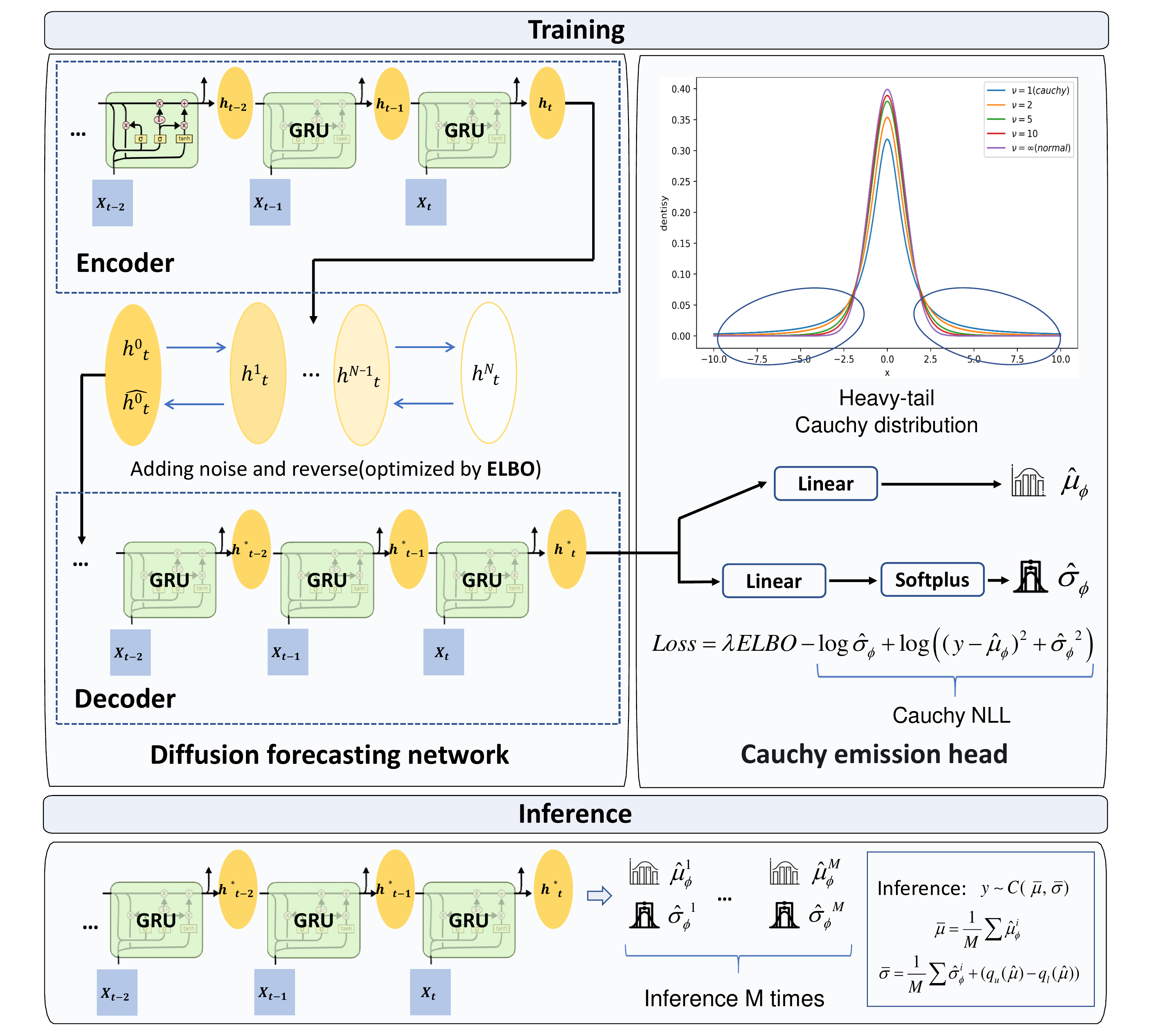}
\centering
\caption{Overview of our proposed DiffLoad method.}
\label{fig2}
\end{figure}

Fig. \ref{fig2} depicts the overall framework of the proposed diffusion forecasting model, including training and inference phases. The proposed Diffusion Load Forecasting model, called DiffLoad, consists of two parts: one is a diffusion forecasting network based on the Seq2Seq structure; the other is an emission head based on the Cauchy distribution. The first part aims to model the probability distribution of hidden states in NN by employing the diffusion structure. This distribution can be seen as the uncertainty of the model itself, i.e., epistemic uncertainty. The second part employs the Cauchy likelihood to model the load data. Based on the characteristics of Cauchy distribution, we can model the uncertainty in load data while resisting the potential adverse effects of issues like data noise. In the following section, we will provide a detailed introduction to the implementation details of these two components and demonstrate how to combine these two components for uncertainty quantification.

\subsection{Epistemic Uncertainty Quantification}
The diffusion model is widely used in generative tasks, which require modeling the data distribution for sampling purposes. However, it is challenging to model the desired distribution without prior knowledge and assumptions. A common approach to address this issue is to transform the desired distribution into a standard distribution and then perform a quasi-transformation. Similar to Variational Auto Encoder (VAE)~\cite{dai2019diagnosing} and Normalizing Flow~\cite{papamakarios2021normalizing}, the diffusion model starts with a normal distribution and eventually transforms it into the desired distribution. Rather than directly transforming the distribution, it proposes a Markov process that breaks the transformation into several steps and adds noise to the original data at each step~\cite{ho2020denoising}. Inspired by the TimeGrad~\cite{rasul2021autoregressive}, which utilizes diffusion structure to generate probabilistic results based on the autoregressive model, we propose to model the distribution of the hidden state of the NN model by the diffusion model. In our model, we transform the hidden state of our Seq2Seq models instead of the original data. This approach allows us to focus on the uncertainties of the model within the hidden state and quantify them. To achieve this goal, our model utilizes the {Gated Recurrent Unit (GRU)~\cite{cho2014learning} as the Encoder, which effectively extracts features from time series data.

Let $q_{\mathbf{h}}\left(\mathbf{h}_{t+1}^0\right)$ denotes the desired distribution of the hidden state. and let $p_{\theta}\left(\mathbf{h}_{t+1}^0\right)$ denote the distribution we use to approximate the real distribution $q_{\mathbf{h}}\left(\mathbf{h}_{t+1}^0\right)$. In the diffusion model, we achieve the approximation by first adding noise $\mathbf{\epsilon}$ to the hidden state: 
\begin{align} 
\mathbf{h}^0_{t+1} &= \text{GRU}(X_{t+1},\mathbf{h}_{t}),\\
\mathbf{h}^{n+1}_{t+1} &= \sqrt{\alpha_n}\mathbf{h}^{n}_{t+1}+\sqrt{1-\alpha_n} \mathbf{\epsilon}, \mathbf{\epsilon} \sim \mathcal{N}\left(\mathbf{0}, \mathbf{I}\right),
\end{align} 
where the $\{\alpha_n\}_{n=1:N} \in (0,1)$ are set values the same as \cite{ho2020denoising}. Note that the Gaussian distribution is stable and has additivity so that we can get the relationship between the original hidden state and the hidden state after $N$ adding noise steps directly.
\begin{align} 
\mathbf{h}^{N}_{t+1} &= \sqrt{\bar{\alpha}_N}\mathbf{h}^{0}_{t+1}+\sqrt{1-\bar{\alpha_N}}\epsilon, \epsilon \sim \mathcal{N}\left(\mathbf{0}, \mathbf{I}\right)\label{for3},
\end{align} 
where $\bar{\alpha}_N=\prod_{n=1}^N \alpha_n$. From \eqref{for3}, we can see that the diffusion process is a kind of interpolation, which makes the original data gradually become white noise. In the following part, we need to figure out how to reverse this process. Note that we will omit the subscripts representing time points without causing ambiguity.

By adding noise, we break the approximation of the desired distribution into several parts $q_{\mathbf{h}}\left(\mathbf{h}^0\right):=\int q_{\mathbf{h}}\left(\mathbf{h}^{0: N}\right) \mathrm{d} \mathbf{h}^{1: N}$ so that we can forecast the desired distribution step by step. However, this kind of breaking, which is  denoted as
\begin{align} 
q_{\mathbf{h}}\left( \mathbf{h}^0\mid \mathbf{h}^{1: N}\right)=\prod_{n=1}^N q_{\mathbf{h}}\left(\mathbf{h}^{n-1} \mid \mathbf{h}^{n}\right),
\end{align} 
is not trainable. To transform the white noise into the hidden state we want, we define a reverse process $p_{\theta}\left(\mathbf{h}^0\right)$ modeled by parameters $\theta$. Similarly, we can break the joint distribution: 
\begin{align}
p_\theta\left(\mathbf{h}^{0:N}\right):=p\left(\mathbf{h}^{N}\right) \prod_{n=1}^N p_\theta\left(\mathbf{h}^{n-1} \mid \mathbf{h}^{n}\right),
\end{align}
where $p\left(\mathbf{h}^{N}\right)$ is assumed to be the standard Gaussian distribution and the other parts are given by a parametrization of our choosing, denoted by
\begin{align}
p_\theta\left(\mathbf{h}^{n-1} \mid \mathbf{h}^{n}\right):=\mathcal{N}\left(\mathbf{h}^{n-1} ; \boldsymbol{\mu}_\theta\left(\mathbf{h}^{n}, n\right), \mathbf{\Sigma}_\theta\left(\mathbf{h}^{n}, n\right)\right)\label{for6}.
\end{align}
With these preparations, we establish a model to eliminate the Gaussian noise by minimizing the negative log-likelihood (NLL):
$-\log p_\theta\left(\mathbf{h}^{0}\right)$. Note that we cannot get the closed form of the reverse distribution $q_{\mathbf{h}}\left(\mathbf{h}^{n-1} \mid \mathbf{h}^{n}\right)$. However, we can fix this problem by considering the origin hidden state as a condition and then minimizing Evidence Lower Bound (ELBO)~\cite{blei2017variational}, which is the upper bound of the NLL.
\begin{align}
&-\log p_\theta\left(\mathbf{h}^{0}\right)= - \log \int p_\theta\left(\mathbf{h}^{0:N}\right) d \mathbf{h}^{1:N}
 \\
& \leq \underbrace{\mathbb{E}_{q\left(\mathbf{h}^{1:N} \mid \mathbf{h}^{0}\right)}\left[\log \frac{p_\theta\left(\mathbf{h}^{0:N}\right)}{q_{\mathbf{h}}\left(\mathbf{h}^{1:N} \mid \mathbf{h}^{0}\right)}\right]}_{ELBO}.
\end{align}
By adding the origin hidden state as the condition, we can rewrite the reverse process like this 
\begin{align}
&q_{\mathbf{h}}\left(\mathbf{h}^{n-1} \mid \mathbf{h}^{n}, \mathbf{h}^{0}\right)\\
&=q_{\mathbf{h}}\left(\mathbf{h}^{n} \mid \mathbf{h}^{n-1}, \mathbf{h}^{0}\right) \frac{q_{\mathbf{h}}\left(\mathbf{h}^{n-1} \mid \mathbf{h}^{0}\right)}{q_{\mathbf{h}}\left(\mathbf{h}^{n} \mid \mathbf{h}^{0}\right)}\nonumber,\\
& \propto \exp \biggl(-\frac{1}{2}\biggl(\left(\frac{\alpha_n}{\beta_n}+\frac{1}{1-\bar{\alpha}_{n-1}}\right) (\mathbf{h}^{n-1})^2\nonumber\\ 
&-\left(\frac{2 \sqrt{\alpha_n}}{\beta_n} \mathbf{h}^{n-1}+\frac{2 \sqrt{\bar{\alpha}_{n-1}}}{1-\bar{\alpha}_{n-1}} \mathbf{h}^{0}\right) \mathbf{h}^{n-1}+C\biggr)\biggr),
\end{align}
where $\beta_n = 1-\alpha_n$, and this formula can be transformed in the form of Gaussian density
\begin{align}
    q_{\mathbf{h}}\left(\mathbf{h}^{n-1} \mid \mathbf{h}^{n}, \mathbf{h}^{0}\right) \propto \mathcal{N}\left(\mathbf{h}^{n-1} ; \tilde{\boldsymbol{\mu}}\left(\mathbf{h}^{n}, \mathbf{h}^{0}\right), \tilde{\beta}_n \mathbf{I}\right).
\end{align}
where
\begin{align}
\tilde{\beta}_n=1 &=\frac{1-\bar{\alpha}_{n-1}}{1-\bar{\alpha}_n}\beta_n,\\
\tilde{\boldsymbol{\mu}}_n\left(\mathbf{h}^{n}, \mathbf{h}^{0}\right)&=\frac{\sqrt{\alpha_n}\left(1-\bar{\alpha}_{n-1}\right)}{1-\bar{\alpha}_n} \mathbf{h}^{n}+\frac{\sqrt{\bar{\alpha}_{n-1}} \beta_n}{1-\bar{\alpha}_n} \mathbf{h}^{0}.
\end{align}
Note that $ELBO = \mathcal{L}_0+\sum_{n=2}^N \mathcal{L}_{n-1}+\mathcal{L}_N$,
where
\begin{align}
\mathcal{L}_0& := - \mathbb{E}_{q_{\mathbf{h}}\left(\mathbf{h}^{1} \mid \mathbf{h}^{0}\right)} \log p_\theta\left(\mathbf{h}^{0} \mid \mathbf{h}^{1}\right), \\ 
\mathcal{L}_{n-1}& :=D_{\mathrm{KL}}\left(q_{\mathbf{h}}\left(\mathbf{h}^{n-1} \mid \mathbf{h}^{n}, \mathbf{h}^{0}\right) \| p_\theta\left(\mathbf{h}^{n-1} \mid \mathbf{h}^{n}\right)\right)\label{for14}, \\ 
\mathcal{L}_N & :=D_{\mathrm{KL}}\left(q_{\mathbf{h}}\left(\mathbf{h}^{N} \mid \mathbf{h}^{0}\right) \| p\left(\mathbf{h}^{N}\right)\right).
\end{align}
Recalling from \eqref{for6}, we let $\mathbf{\Sigma}_\theta\left(\mathbf{h}^n, n\right) = \tilde{\beta}_n \mathbf{I}$ for $n = 1,\dots, N-1$ to simplify the training and make the training process smoother. In this way, we can solve the problem of approximating the reverse distribution by approximating the expectation, that is 
\begin{align}
    \eqref{for14}=\mathbb{E}_{q_{\mathbf{h}}\left(\mathbf{h}^{n} \mid \mathbf{h}^{0}\right)}\left[\frac{1}{2 \tilde{\beta}_n}\left\|\tilde{\boldsymbol{\mu}}_n\left(\mathbf{h}^{n}, \mathbf{h}^{0}\right)-\boldsymbol{\mu}_\theta\left(\mathbf{h}^{n}, n\right)\right\|^2\right]\label{for16}.
\end{align}
For $n=0$, \cite{DBLP:journals/corr/abs-2110-05948} claims that we can ignore it for simplification. As for $n=N$, $p\left(\mathbf{h}^{N}\right)$ is the standard Gaussian distribution that no parameters need to be learned. 
So far, we have demonstrated how to construct a reverse distribution to denoise white noise and it seems simple enough to use a NN to forecast the expectation. However, it is worth noting that the expectation here is generated by a non-standard normal distribution, which cannot be processed by simple gradient descent. Therefore, we need to use the re-parameter trick commonly used in VAE models. Recalling from \eqref{for3}, we can replace expectations with noise to rewrite optimization objectives:
\begin{align}
    &\eqref{for16}\nonumber\\ 
    &= \mathbb{E}_{\mathbf{h}^{0}, \epsilon \sim \mathcal{N}(\mathbf{0}, \mathbf{I})}\biggl[\frac{1}{2 \tilde{\beta}_n}\biggl\|\frac{1}{\sqrt{\alpha_n}}\left(\mathbf{h}^{n}\left(\mathbf{h}^{0}, \epsilon\right)-\frac{\beta_n}{\sqrt{1-\bar{\alpha}_n}} \epsilon\right) \notag\\
    &-\boldsymbol{\mu}_\theta\left(\mathbf{h}^{n}\left(\mathbf{h}^{0}, \epsilon\right), n\right)\biggr\|^2\biggr],\\
    & \propto \mathbb{E}_{\mathbf{h}^{0}, \epsilon \sim \mathcal{N}(0, \mathbf{I})}\left\|\epsilon-\epsilon_\theta\left(\sqrt{\bar{\alpha}_n} \mathbf{h}^{0}+\sqrt{1-\bar{\alpha}_n} \epsilon, n\right)\right\|^2\label{for18}.
\end{align}
Note that we removed the weight coefficients in the final simplification step to obtain the final optimization goal. By reducing the difference between the real generated Gaussian noise and the noise generated by the NN, we can use the $p_\theta\left(\mathbf{h}^{n-1} \mid \mathbf{h}^n\right)$ to approximate the $q_{\mathbf{h}} \left(\mathbf{h}^{n-1} \mid \mathbf{h}^n, \mathbf{h}^0\right)$ step by step, thus transforming the white noise to a probabilistic hidden state, which represents the epistemic uncertainty of the model. We will illustrate the detailed process for training and inference in Subsection \ref{train_and_infer}.

\subsection{Aleatoric Uncertainty Quantification}\label{aleatoric}
To model aleatoric uncertainty, we will employ an emission head to wrap the forecasting model. The emission head controls the conditional error distribution between the labels and forecasts. Instead of using the traditional Gaussian distribution which is not heavy-tailed, we suggest using the heavy-tailed Cauchy distribution to make the model more robust to outliers and mutation according to robust statistics~\cite{huber2011robust,li2022learning}. Similar to the Gaussian distribution, the Cauchy distribution can be modeled by location and scale parameters:
\begin{align}
f\left(y ; \mu, \sigma\right)=\frac{1}{\pi \sigma\left[1+\left(\frac{y-\mu}{\sigma}\right)^2\right]}=\frac{1}{\pi}\left[\frac{\sigma}{\left(y-\mu\right)^2+\sigma^2}\right].
\end{align}

To demonstrate how the Cauchy distribution is robust and tolerates the noise on standardized training labels $\{y_t\}_{t=1}$, we first define three common types of training label noise:
\begin{enumerate}
    \item Constant type noise: $y_t^{\mathrm{A}}=y_t+\epsilon, \epsilon$ is constant.
    \item Missing type noise: $y_t^{\mathrm{A}}=\epsilon, \epsilon$ is constant. 
    \item Gaussian type noise: $y_t^{\mathrm{A}}=y_t+\epsilon, \epsilon \sim \mathcal{N}\left(0, \sigma^2\right)$.
\end{enumerate}

Note that $\{y_t\}_{t=1}$ represent clean labels without noise. In observed training labels $\{\tilde y_t\}_{t=1}$, we assume that whether there is noise depends on a binomial distribution, which means 
$$\widetilde{y_t}=\left\{\begin{array}{ll}y_t, & \text { with probability } 1-\eta, \\ y_t^{\mathrm{A}}, & \text { with probability } \eta.\end{array}\right.$$

\begin{lemma}\label{lemma1}~\cite{cheng2024robusttsf} Let $\ell$ be the loss function, $f$ be the forecasting model, $R_{\ell}(f)$ be the empirical loss on the clean training set, and $R^\eta_{\ell}(f)$ be the empirical loss on the training set with noise. Under different noise anomalies with anomaly rate $\eta<0.5$, we have
\begin{align}
R^\eta_{\ell}(f)=(1-2 \cdot \eta) \cdot R_{\ell}(f)+\eta \cdot \mathbb{E}_{\boldsymbol{x}}[ \ell\left(f(\boldsymbol{x}), y\right)+\ell(f(\boldsymbol{x}), y^A)]
\end{align}
\end{lemma}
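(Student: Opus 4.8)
The plan is to compute the noisy empirical risk $R^\eta_\ell(f)$ directly by averaging the loss over the randomness of the corruption mechanism, and then to rearrange the result into the claimed form by a single algebraic split of the coefficients. Since the statement is an exact identity rather than an inequality or a bound, the entire argument reduces to careful bookkeeping with the law of total expectation. It is worth flagging at the outset that the hypothesis $\eta < 0.5$ is not actually needed to derive the identity, which holds for every $\eta \in [0,1]$; it only guarantees that the coefficient $1-2\eta$ stays positive, which is what makes the decomposition interpretable downstream.

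First I would attach to each training point an independent Bernoulli corruption indicator $B$ with $\Pr(B=0)=1-\eta$ (the label is clean, $\widetilde{y}=y$) and $\Pr(B=1)=\eta$ (the label is anomalous, $\widetilde{y}=y^A$), exactly matching the binomial assumption stated just before the lemma. By definition $R^\eta_\ell(f)$ is the expected loss taken over $\boldsymbol{x}$ and the observed label $\widetilde{y}$, so conditioning on $B$ and using linearity of expectation gives
\begin{align}
R^\eta_\ell(f) = \mathbb{E}_{\boldsymbol{x}}\bigl[(1-\eta)\,\ell(f(\boldsymbol{x}),y) + \eta\,\ell(f(\boldsymbol{x}),y^A)\bigr].
\end{align}
This step is uniform across the three noise types (constant, missing, Gaussian) defined above, because the only property used is that $\widetilde{y}$ equals either $y$ or $y^A$ according to the independent Bernoulli draw; the anomalous label $y^A$ is allowed to depend on $y$ and $\boldsymbol{x}$ in an arbitrary way without affecting the computation.

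Next I would perform the splitting $1-\eta = (1-2\eta) + \eta$ and regroup. Writing $R_\ell(f) = \mathbb{E}_{\boldsymbol{x}}[\ell(f(\boldsymbol{x}),y)]$ for the clean risk, the portion $(1-2\eta)\,\mathbb{E}_{\boldsymbol{x}}[\ell(f(\boldsymbol{x}),y)]$ collapses to $(1-2\eta)\,R_\ell(f)$, while collecting the residual $\eta\,\ell(f(\boldsymbol{x}),y)$ together with the anomalous term $\eta\,\ell(f(\boldsymbol{x}),y^A)$ produces the symmetric expression $\eta\,\mathbb{E}_{\boldsymbol{x}}[\ell(f(\boldsymbol{x}),y)+\ell(f(\boldsymbol{x}),y^A)]$. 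Summing the two reproduces the right-hand side of the lemma verbatim.

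I do not expect a genuine analytic obstacle; the subtlety is conceptual rather than technical. The one point requiring care is to interpret \emph{empirical loss} consistently as the expectation under the (possibly corrupted) label distribution, so that the per-sample identity lifts to the risk level by linearity --- this is precisely where the independence of the corruption draw across samples is used implicitly. Everything beyond that is the single rearrangement above, so the real content of the lemma lies not in the difficulty of the derivation but in the interpretation it affords: training on the corrupted set behaves like training on the clean risk (scaled by $1-2\eta>0$) plus a symmetric regularization term of weight $\eta$, which is what justifies the robustness claims for the Cauchy emission head in Section~\ref{aleatoric}.
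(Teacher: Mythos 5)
Your derivation is correct: conditioning on the Bernoulli corruption indicator gives $R^\eta_\ell(f)=\mathbb{E}_{\boldsymbol{x}}\left[(1-\eta)\,\ell(f(\boldsymbol{x}),y)+\eta\,\ell(f(\boldsymbol{x}),y^A)\right]$, and the split $1-\eta=(1-2\eta)+\eta$ immediately yields the stated identity. The paper itself gives no proof of this lemma --- it is imported verbatim from \cite{cheng2024robusttsf} --- so there is no in-paper argument to diverge from; your computation is the standard one, and your side remark that $\eta<0.5$ is not needed for the identity itself but only to keep the coefficient $1-2\eta$ positive is also accurate.
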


Lemma \ref{lemma1} suggests that the empirical loss on clean labels can be represented as an affine variation of the empirical loss with noise. Therefore, the impact of the label noise on the loss function (where we use negative log-likelihood with different probability distributions) will mainly be reflected in $\ell\left(f(\boldsymbol{x}), y\right)+\ell(f(\boldsymbol{x}), y^A)$. 

\begin{theorem}\label{theorem1}
For outliers that exceed the predicted scale, i.e., $| y^A - f(\boldsymbol{x}) | \geq \sigma$, we have 
\begin{align}
|\frac{d R_{c}(f)}{d y^A}| \leq |\frac{d R_{g}(f)}{d y^A}|,
\end{align}
where $R_{c}(f)$ and $R_{g}(f)$ represent the empirical loss of using Cauchy likelihood and Gaussian likelihood, respectively.
\end{theorem}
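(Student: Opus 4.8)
The plan is to reduce the comparison of empirical-loss gradients to a pointwise comparison of the two negative log-likelihood derivatives, and then to a single scalar inequality in the residual magnitude. First I would invoke Lemma~\ref{lemma1}: in the decomposition $R^\eta_\ell(f)=(1-2\eta)R_\ell(f)+\eta\,\mathbb{E}_{\boldsymbol x}[\ell(f(\boldsymbol x),y)+\ell(f(\boldsymbol x),y^A)]$, the only term carrying a dependence on the anomaly value $y^A$ is $\ell(f(\boldsymbol x),y^A)$, so that $\frac{dR_\ell(f)}{dy^A}=\eta\,\mathbb{E}_{\boldsymbol x}\!\left[\frac{\partial}{\partial y^A}\ell(f(\boldsymbol x),y^A)\right]$. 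Since the positive factor $\eta$ and the outer expectation appear identically for the Cauchy and Gaussian choices of $\ell$, the inequality to be proved collapses to comparing the per-sample derivatives $\left|\partial_{y^A}\ell_c\right|$ and $\left|\partial_{y^A}\ell_g\right|$ under the monotone map $|\cdot|$.

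Next I would write down the two negative log-likelihoods explicitly, discarding the additive constants independent of $y^A$. With $z:=y^A-f(\boldsymbol x)$ the residual, the Cauchy NLL contributes $\log\!\bigl[1+(z/\sigma)^2\bigr]$ and the Gaussian NLL contributes $z^2/(2\sigma^2)$. Differentiating in $y^A$ (equivalently in $z$) gives $\partial_{y^A}\ell_c=\dfrac{2z}{\sigma^2+z^2}$ and $\partial_{y^A}\ell_g=\dfrac{z}{\sigma^2}$. This already exposes the mechanism: the Gaussian gradient grows linearly and without bound in $z$, whereas the Cauchy gradient is a redescending influence function that peaks and then decays, so for sufficiently large residuals it is necessarily the smaller of the two in magnitude.

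Then I would turn the claim into a scalar inequality. Taking absolute values and dividing through by $|z|$ (the case $z=0$ being trivial since both derivatives vanish) reduces $\bigl|\partial_{y^A}\ell_c\bigr|\le\bigl|\partial_{y^A}\ell_g\bigr|$ to $\dfrac{2}{\sigma^2+z^2}\le\dfrac{1}{\sigma^2}$, i.e.\ $2\sigma^2\le\sigma^2+z^2$, i.e.\ $\sigma^2\le z^2$. This is exactly the hypothesis $|y^A-f(\boldsymbol x)|\ge\sigma$, so the inequality holds on the stated outlier region; re-attaching the common factor $\eta\,\mathbb{E}_{\boldsymbol x}[\cdot]$ and using monotonicity of $|\cdot|$ under the expectation then yields the claim for $R_c$ and $R_g$.

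I do not expect a serious obstacle, since the core is a one-line calculus comparison; the only step requiring care is the first, namely justifying via Lemma~\ref{lemma1} that the $y^A$-gradient of the empirical loss is governed solely by the single anomalous-likelihood term, so the common positive prefactor $\eta$ and the expectation can be stripped off without changing the direction of the inequality. A secondary subtlety is that the comparison implicitly uses a common scale $\sigma$ for both likelihoods; I would record this as the standing assumption, which makes the threshold $|z|=\sigma$ the exact crossover point between the bounded Cauchy influence and the unbounded Gaussian influence.
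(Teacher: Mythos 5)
Your proposal is correct and follows essentially the same route as the paper's proof: reduce via Lemma~\ref{lemma1} to the single term $\ell(f(\boldsymbol{x}),y^A)$, compute the two derivatives $\frac{2z}{\sigma^2+z^2}$ and $\frac{z}{\sigma^2}$, and observe that their ratio $\frac{2\sigma^2}{z^2+\sigma^2}$ is at most $1$ precisely when $|z|\geq\sigma$. Your added remarks on stripping the common factor $\eta\,\mathbb{E}_{\boldsymbol{x}}[\cdot]$ and on the shared scale $\sigma$ are sound refinements of the same argument, not a different approach.
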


\begin{proof}
According to Lemma \ref{lemma1}, we only need to analyze $\ell\left(f(\boldsymbol{x}), y\right)+\ell(f(\boldsymbol{x}), y^A)$.
Substitute the negative log-likelihood functions of the Cauchy distribution and Gaussian distribution separately and then take the derivative of the noisy label, we have
\begin{align}|\frac{d R_{c}(f)}{d y^A}| &= \frac{2|y^A-f(\boldsymbol{x})|}{(f(\boldsymbol{x})-y^A)^2+\sigma^2}\\
 |\frac{d R_{g}(f)}{d y^A}| &= \frac{|y^A-f(\boldsymbol{x})|}{\sigma^2}\\
|\frac{d R_{c}(f)}{d y^A}| / |\frac{d R_{g}(f)}{d y^A}| &= \frac{2\sigma^2}{(f(\boldsymbol{x})-y^A)^2+\sigma^2}\leq 1
\end{align}
\end{proof}
Theorem \ref{theorem1} indicates that the usage of Cauchy distribution has a higher tolerance for outlier labels during training, and the variation of outlier labels does not greatly affect the performance of the model in fitting clean labels. In contrast, Gaussian distribution is more susceptible to the influence of outliers, and slight perturbations may cause significant changes in the model, which is not conducive to learning with noisy labels.

In our implementation, the parameters of the Cauchy distribution will be given by the Decoder parameterized by $\phi$ like this (To avoid confusion with the input of the encoder, we mark * above to indicate the input of the decoder)
\begin{align}
\mathbf{h}_{t+1}^* &= \text{GRU}(X_{t},\mathbf{h}_{t}^{*}),\\
p_{\phi}\left(X_{t+1} \mid \mathbf{h}^*_{t+1}\right)&=\mathcal{C}\left(X_{t+1}; \boldsymbol{\mu}_{\phi(t+1)}, \boldsymbol{\sigma}_{\phi(t+1)}\right),
\end{align}
where 
\begin{align}
\boldsymbol{\mu}_{\phi(t+1)} & =\mathrm{Linear}_1\left(\mathbf{h}_{t+1}^*\right), \\ \boldsymbol{\sigma}_{\phi(t+1)} & =\operatorname{SoftPlus}\left[\mathrm{Linear}_2\left(\mathbf{h}_{t+1}^*\right)\right].
\end{align}

In this way, we can model the conditional distribution of the error by the Cauchy distribution, which represents the aleatoric uncertainty. Note that the Cauchy distribution is a special case of Student-T distribution, as shown in Fig. \ref{fig2}. Although some degrees of flexibility are sacrificed, the advantage of the Cauchy distribution is that it is a $\alpha$-stable distribution.

\begin{myDef}\cite{georgiou1999alpha}
$\alpha$-stable distribution is a kind of distribution that has no general closed form, but it can be defined by the continuous Fourier transform of its characteristic function $\varphi(t)$,
\begin{align}
f(x ; \alpha, \beta, \sigma, \mu)=\frac{1}{2 \pi} \int_{-\infty}^{+\infty} \varphi(t) e^{-i t x} d t,
\end{align}
\begin{align}
    \varphi(t)=\exp \left[i t \mu-|\sigma t|^\alpha(1-i \beta \operatorname{sgn}(t) \Phi)\right],
\end{align}
where
\begin{align}
    \begin{split}
\Phi= \left \{
\begin{array}{ll}
    -(2 / \pi) \log |t|,                    & \alpha=1,\\
    \tan (\pi \alpha / 2),                                 & otherwise.
\end{array}
\right.
\end{split}
\end{align}
\end{myDef}
Note that Gaussian and Cauchy are both special cases of this kind of distribution while Student-T is not. The advantage of using $\alpha$-stable distribution here is that we can combine two kinds of uncertainties by their additivity and linear transformation invariance.
\begin{lemma}\label{lemma2}
Suppose $X_1$ and $X_2$ are two random variables that subject to 
$\alpha$-stable distribution $f(x ; \alpha, \beta, \sigma_1, \mu_1)$ and $f(x ; \alpha, \beta, \sigma_2, \mu_2)$, then we have 
\begin{align}
    aX_1+b &\sim f(x ; \alpha, \beta, a^{\alpha}\sigma_1, a\mu_1+b),\\
    X_1+X_2 &\sim f(x ; \alpha, \beta, (\sigma_1^{\alpha}+\sigma_2^{\alpha})^{\frac{1}{\alpha}}, \mu_1+\mu_2).
\end{align}
\end{lemma}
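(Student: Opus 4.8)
The plan is to argue entirely through characteristic functions, which is natural because the $\alpha$-stable family has just been \emph{defined} through $\varphi(t)$; the two claims are then nothing more than statements about how $\varphi$ behaves under scaling, translation, and convolution. In each case I would compute the characteristic function of the left-hand side, massage it back into the canonical form $\exp\!\left[it\mu' - |\sigma' t|^{\alpha}\bigl(1 - i\beta\,\mathrm{sgn}(t)\Phi\bigr)\right]$, and read off the new parameters $(\alpha',\beta',\sigma',\mu')$ by matching. Since the definition recovers the density from $\varphi$ by a one-to-one Fourier inversion, equality of characteristic functions is equivalent to equality in distribution, so this matching constitutes a complete argument.

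For the affine claim I would take $a>0$ and start from the elementary identity $\varphi_{aX_1+b}(t)=\mathbb{E}\bigl[e^{it(aX_1+b)}\bigr]=e^{itb}\,\varphi_{X_1}(at)$. Substituting the definition of $\varphi_{X_1}$ and using $|\sigma_1\,at|^{\alpha}=a^{\alpha}|\sigma_1 t|^{\alpha}$ together with $\mathrm{sgn}(at)=\mathrm{sgn}(t)$, the exponent collapses to $it(a\mu_1+b) - a^{\alpha}|\sigma_1 t|^{\alpha}\bigl(1 - i\beta\,\mathrm{sgn}(t)\Phi\bigr)$. This is already the canonical form with unchanged $\alpha,\beta$, location $a\mu_1+b$, and a transformed scale fixed by $(\sigma')^{\alpha}=a^{\alpha}\sigma_1^{\alpha}$; in the Cauchy case $\alpha=1$ this reduces to precisely the stated $a^{\alpha}\sigma_1=a\sigma_1$.

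For the convolution claim I would first make explicit the independence of $X_1$ and $X_2$, which the statement tacitly assumes and without which the conclusion fails; under independence $\varphi_{X_1+X_2}(t)=\varphi_{X_1}(t)\,\varphi_{X_2}(t)$. Because the two factors carry the \emph{same} $\alpha$ and $\beta$, the common factor $\bigl(1 - i\beta\,\mathrm{sgn}(t)\Phi\bigr)$ pulls out of both exponents, so the exponents simply add: the imaginary parts give location $\mu_1+\mu_2$, while the real parts give a scale term $(\sigma_1^{\alpha}+\sigma_2^{\alpha})|t|^{\alpha}$, which I would rewrite as $\bigl|(\sigma_1^{\alpha}+\sigma_2^{\alpha})^{1/\alpha}\,t\bigr|^{\alpha}$ to recover the advertised scale $(\sigma_1^{\alpha}+\sigma_2^{\alpha})^{1/\alpha}$.

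The step I expect to be the genuine obstacle, where a fully general proof needs care, is the degenerate index $\alpha=1$. There $\Phi=-(2/\pi)\log|t|$ depends on $t$ and is \emph{not} homogeneous under the rescaling $t\mapsto at$, so the identity $\log|at|=\log|a|+\log|t|$ injects an extra $\log|a|$ term that perturbs the location parameter, and the clean formula $a\mu_1+b$ then requires a correction. A secondary subtlety is the sign of $a$: for $a<0$ one has $\mathrm{sgn}(at)=-\mathrm{sgn}(t)$, so the skewness is sent to $\mathrm{sgn}(a)\beta$ rather than being preserved. Both complications are nevertheless inert for the paper's use case, since the Cauchy emission head is symmetric with $\beta=0$, which annihilates the entire $\Phi$-dependent term and leaves the location and scale rules exactly as stated.
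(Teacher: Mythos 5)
The paper gives no proof of Lemma~\ref{lemma2} at all: it is stated as a known closure property of $\alpha$-stable laws, implicitly inherited from the cited definition via the characteristic function, so there is nothing in the text to compare your argument against step by step. Your characteristic-function route is the canonical (indeed essentially the only) way to establish the lemma, and your execution is sound: $\varphi_{aX_1+b}(t)=e^{itb}\varphi_{X_1}(at)$ and, under independence, $\varphi_{X_1+X_2}=\varphi_{X_1}\varphi_{X_2}$, followed by matching against the canonical exponent. You also correctly flag the three points the paper glosses over: independence of $X_1$ and $X_2$ is tacitly assumed and genuinely needed; for $\alpha=1$ with $\beta\neq 0$ the non-homogeneity of $\Phi=-(2/\pi)\log|t|$ under $t\mapsto at$ injects a $\log|a|$ correction into the location; and $a<0$ flips the skewness to $\mathrm{sgn}(a)\beta$. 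One observation of yours deserves emphasis rather than the parenthetical treatment you give it: under the paper's parametrization, where the exponent carries $|\sigma t|^{\alpha}=\sigma^{\alpha}|t|^{\alpha}$, your computation yields $(\sigma')^{\alpha}=a^{\alpha}\sigma_1^{\alpha}$, i.e.\ $\sigma'=a\sigma_1$, \emph{not} the stated $a^{\alpha}\sigma_1$; the lemma as written mixes the $\sigma$-convention (used in the addition rule, which you verify is consistent) with the $\gamma=\sigma^{\alpha}$ convention (which would give $a^{\alpha}\gamma_1$). The two agree only at $\alpha=1$, so the statement is strictly correct only for the Cauchy/Gaussian boundary case the paper actually invokes, with $\beta=0$ killing the $\Phi$-term. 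Your proof is correct where the lemma is correct, and you have in effect located the one place where the lemma's general statement is not.
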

With this property, we can estimate two kinds of uncertainties separately and combine the uncertainties by adding them together directly, which will be described in the following section.

\subsection{Training and Inference}\label{train_and_infer}
As shown in Fig. \ref{fig2}, we first get the hidden state $\hat{\mathbf{h}}_{t+1}^0$ after inputting the data into the diffusion-based Encoder. During this process, we concentrate the uncertainty of the model into the hidden state. This idea was inspired by \cite{tagasovska2019single}. Based on a fixed network structure, we can evaluate whether a neural network has confidence in its hidden state by observing the degree of change in the network's hidden. Unlike \cite{tagasovska2019single}, we do not need to construct explicit constraints on the network's hidden state and train multiple models. Instead, we directly use the diffusion model to model the probabilistic hidden state. Through probabilistic modeling results, we can obtain the confidence level of the model in the hidden state and thus estimate the epistemic uncertainty. For the diffusion model, we have the first term of the optimization goal, which is stated in \eqref{for18}. Then, we put the estimated hidden state into the Decoder. The output of the Decoder will be seen as the parameter of the emission distribution and optimized by the NLL. During the training process, We combine the two losses through hyperparameter $\lambda$:
\begin{align}\label{for29}
    \mathcal{L} = \lambda \cdot ELBO -\log \hat{\sigma}_{\phi}+\log \left((y-\hat{\mu}_\phi)^2+\hat{\sigma}_{\phi}^2\right)
\end{align}

\begin{algorithm}[htb]
\caption{Training process of diffusion-based Seq2Seq DiffLoad method.}\label{ag1}
\KwData{Encoder $P_{\phi}$;
Decoder $Q_{\phi}$;
Reverse network $R_{\theta}$
Input load history data $X$; label $Y$}
\While{not convergence}
{
\For{$\text{batch}$ in $\text{batch loader}$}{
$\mathbf{h}^0\leftarrow P_{\phi}(X)$\;
Add Gaussian to the hidden state for N steps,
$\mathbf{h}^{N} = \sqrt{\bar{\alpha_N}}\mathbf{h}^{0}+\sqrt{1-\bar{\alpha_N}}\epsilon, \epsilon \sim \mathcal{N}\left(\mathbf{0}, \mathbf{I}\right)$;\

Sample $n \sim U(\{1,2,3 \ldots N\})$\;
Sample $\epsilon \sim \mathcal{N}(\mathbf{0}, I)$\;
Calculate the ELBO loss,
$ELBO_{\epsilon}=\left\|\epsilon-\epsilon_\theta\left(\sqrt{\bar{\alpha}_n} \mathbf{h}^0+\sqrt{1-\bar{\alpha}_n} \epsilon, n, R_\theta\right)\right\|^2$\;
Reconstruct the hidden state $\mathbf{h}^*;$\\
$\hat{\sigma}_{\phi}, \hat{\mu}_{\phi}\leftarrow Q_{\phi}(X,\mathbf{h}^*);$\\
Combine the ELBO with the NLL with hyperparameter $\lambda$,
$\mathcal{L}=(\ref{for29})$\;
Take gradient descent step on $\nabla_{\theta,\phi}\mathcal{L}$\;

}
}
\KwResult{Encoder $P_{\phi}$;
Decoder $Q_{\phi}$;
Reverse network $R_{\theta}$}
\end{algorithm}
As for the inference process, we will infer for $M$ times with all other settings consistent. In each inference process, the output of the Encoder undergoes the process of adding and removing noise, thus exhibiting randomness like most of the deep state model~\cite{salinas2020deepar}. The output of our model is the parameters of the emission model and the location parameter will be the average of multiple inferences.
\begin{align}
    \bar{\mu}=\frac{1}{M} \sum \hat{\mu}_\phi^i
\end{align}

In terms of uncertainty estimation, we separate the two kinds of uncertainties. On the one hand, the scale parameters provided by the model represent aleatoric uncertainty, and on the other hand, the distance between upper and lower quantiles of location parameters obtained through multiple inferences represents epistemic uncertainty. For the probabilistic forecasting, we use Lemma \ref{lemma1} to directly add the two uncertainties together.
\begin{align}
\centering
    \bar{\sigma}&=\hat{\sigma}_{\phi}+ \hat{\sigma}_{\theta},\\
    & = \frac{1}{M} \sum \hat{\sigma}_\phi^i+\left(q_u(\hat{\mu})-q_l(\hat{\mu})\right)
\end{align}
where $\hat{\sigma}_{\phi}$, $\hat{\sigma}_{\theta}$ represents the estimated epistemic and aleatoric uncertainty seperately and $q_u()$ and $q_l()$ are the upper and and lower quantile of the samples $\{\hat{\mu}_{\phi}^1, \hat{\mu}_{\phi}^2, \dots, \hat{\mu}_{\phi}^M\}$.

\begin{algorithm}[htb]
\caption{Inference process of diffusion-based Seq2seq DiffLoad method.}\label{ag2}
\KwData{Encoder $P_{\phi}$;
Decoder $Q_{\phi}$;
Reverse network $R_{\theta}$
Input load history data $X$}
\For{$m=M$ to $1$}{
$\mathbf{h}^0\leftarrow P_{\theta}(X)$\;
$\mathbf{h}^{N} = \sqrt{\bar{\alpha_N}}\mathbf{h}^{0}+\sqrt{1-\bar{\alpha_N}}\epsilon, \epsilon \sim \mathcal{N}\left(\mathbf{0}, \mathbf{I}\right)$;\

\For{$n=N$ to $1$}{
\eIf{$n > 1$}{
Draw $z \sim \mathcal{N}(\mathbf{0}, I)$
}{else set $z = 0$}
Calculate re-parameterized term $\mathbf{h}^{n-1}=\frac{1}{\sqrt{\alpha_n}}\left(\mathbf{h}^n-\frac{\beta_n}{\sqrt{1-\bar{\alpha}_n}} \epsilon_\theta\left(\mathbf{h}^n, n, R_\theta\right)\right)+\sqrt{\tilde{\beta}_t} \mathbf{z}$;\
}
$\hat{\sigma}_{\phi}^m, \hat{\mu}_{\phi}^m\leftarrow Q_{\phi}(X,\mathbf{h}^*)$;\
}
\KwResult{$\{\hat{\sigma}_{\phi}^1, \hat{\sigma}_{\phi}^2, \dots, \hat{\sigma}_{\phi}^M\}$, $\{\hat{\mu}_{\phi}^1, \hat{\mu}_{\phi}^2, \dots, \hat{\mu}_{\phi}^M\}$}
\end{algorithm}

With the labels used above, we summarize the training and inference process of our framework in Algorithm \ref{ag1}, and Algorithm \ref{ag2}, respectively.

\section{Case Studies}\label{Example analysis}
\subsection{Experimental Setups}
In this section, we use three data sets to verify the effectiveness of our method. 
\begin{itemize}
    \item Global Energy Forecasting (GEF) competition~\cite{hong2016probabilistic}. It contains the power load data from 2004 to 2014. The data from 2012 to 2014 is used to verify our model.
    \item The BDG2 dataset~\cite{miller2020building}. This data set contains energy data for 2 years (from 2016 to 2017) from 1,636 buildings. We randomly selected 10 buildings with different usages (e.g., education, lodging, and industrial) from it.
    \item The dataset from Day-ahead electricity demand forecasting competition: Post-covid paradigm~\cite{farrokhabadi2022day}. This data set
contains energy data from 2017-03-08 to 2020-11-06. As shown in Fig. \ref{COV}, the power load data has significantly deviated from the original pattern after the outbreak of COVID-19.
\end{itemize}

Since our model can give both deterministic and probabilistic forecasts. Mean Absolute Percentage Error (MAPE) is used to evaluate the deterministic forecasts. To avoid the drawbacks of MAPE which is sensitive to the value near 0 (even though it is unusual in load forecasting), we also add the Mean Absolute Error (MAE) metric as a supplement. 
Continuous Ranked Probability Score (CRPS)~\cite{matheson1976scoring} and Winker Score~\cite{wang2022adaptive} are used to evaluate the probabilistic forecasts. For comparison baselines, we introduce two MLP-based time series forecasting methods for deterministic forecasting and five probabilistic methods for uncertainty quantification.

\begin{itemize}

\item \text{NBEATS}~\cite{oreshkin2019n}: A time series forecasting method based on MLP, which can select blocks with different structures to decompose the sequence. Here we compared two types of NBEATS. One is based on two Generic Blocks (denoted as G), and the other is based on a Trend Block and a Seasonality Block (denoted as TS). Note that here we consider the influence of covariates and use the version from~\cite{olivares2023neural}, denoted as NBEATSX.

\item \text{DLinear}~\cite{zeng2023transformers}: A time series forecasting method based on MLP, which performs trend period decomposition on time series. We added a linear layer to the model to map multivariate sequences containing covariates to univariate sequences.

\item \text{GRU}~\cite{cho2014learning}: a kind of RNN structure widely used in sequence modeling. All the methods mentioned below are based on the GRU structure. Here, we apply the forecasting error of GRU on the training set to construct probabilistic forecasting and quantify uncertainty~\cite{liang2023uncertainty}.

\item \text{DeepAR}~\cite{salinas2020deepar}: a kind of deep state model, which uses the normal distribution to model the output of the deep neural network. In this way, we can model the aleatoric uncertainty by the normal distribution.

\item \text{Deep Ensemble}~\cite{lakshminarayanan2017simple}: The methods mentioned above only consider the aleatoric uncertainty caused by the data itself, while ignoring the uncertainty introduced by the neural network. Ensemble methods address this by training multiple models during the training process. During the testing process, we combine the outputs of all networks and assume their output conforms to a normal distribution, similar to the GRU with MSE loss functions.

\item \text{Bayes By Backprop(BBB)}~\cite{tran2019bayesian,kendall2017uncertainties}: Bayesian neural network is an extension of the ensemble training method. Rather than training multiple networks, Bayesian neural networks consider the parameters in neural networks as random variables instead of fixed values. Because of the excellent conjugate property of normal distribution, such methods usually assume that the prior distribution of parameters is normal, and use the reparameterization tricks to establish ELBO so that it can be optimized by gradient descent methods.

\item \text{MC dropout (MCD)}~\cite{gal2016dropout}: MC dropout has been proven to be an approximation of Bayesian neural networks while maintaining a much lower computational cost. During the training process, we set the dropout layer in the same manner as in ordinary settings. However, in the inference process, we do not turn off the dropout layer, allowing us to obtain probabilistic output. We then consider the output to correspond to a normal distribution.

\end{itemize}

Methods based on GRU networks share the same hyperparameter shown in Table \ref{hyper}. Specifically, for hidden size, we choose from \{32,64\}, and then we test the performance of the original GRU on the GEF dataset. For hidden layers and bias, we did the same and chose the best setting from \{1,2\} and \{True, False\}, respectively. And the hidden size of NBEATS class methods is also 64. Note that all other methods do not enable dropout except for the dropout method, which requires adjusting the dropout rate to 0.25 for probability output (the same in~\cite{liang2023uncertainty}). In addition to consistent model parameters, all methods use the same training process, where the batch size is 256, and we use Adam as the optimizer with an initial learning rate of 5e-3. During training, we set the hyperparameter $\lambda$ to 1 and adopted an early stop mechanism. If 15 consecutive gradient updates do not achieve better RMSE on the validation set or if the total number of training epochs reaches 300, we will stop continuing the training. In terms of probability inferencing, we set the number of times M for repeated inference to 100. At the same time, our method will search the validation set, and select the quantile distance of the lowest CRPS from the quantile distances of $10\%$, $30\%$, $50\%$, and $70\%$ as our estimation of epistemic uncertainty.

\begin{table}[htb]
\centering
\caption{Hyperparameters of the GRU module}
\begin{tabular}{cccc}
\hline\hline
hidden size & hidden layers & bias & dropout rate \\ \hline
(64,64)     & 2             & True & 0(0.25)      \\ \hline\hline
\end{tabular}
\label{hyper}
\end{table}

\subsection{Experimental Results}
In this section, we will compare the performance of different uncertainty estimation models on multiple datasets. Among them, the dataset GEF is a relatively stable aggregated level load. COV is also aggregated level data, but when COVID-19 comes, it shows an obvious deviation. BDG2 includes 10 different types of buildings, some of which may have a large number of outliers and bad data, making data cleaning more difficult. To demonstrate the robustness of the Cauchy emission head in combating outliers and offsets, we will not perform additional processing on it.

\subsubsection{Results Analysis and Discussion}
Table \ref{MAPE} and Table \ref{MAE} summarize the deterministic forecasting result for different methods. Our method has defeated all competitive baselines. In addition to directly comparing the accuracy of different models, we can also draw several conclusions from this experimental result. Firstly, the simple GRU network achieved the worst performance in almost all tasks. This indicates that estimating uncertainty is not only beneficial for probabilistic forecasting, but also has a positive effect on deterministic forecasting. Similarly, models that consider the uncertainty brought by the model itself, such as Ensemble, Bayesian, and other methods, are also superior to the DeepAR model that only considers aleatoric uncertainty in most cases. Indicating that epidemic uncertainty is indeed a factor worth considering in the training process of deep neural networks. Secondly, from the perspective of model structure, our seq2seq structure based on Gaussian distribution emission heads achieved suboptimal results in the average of 10 building datasets and the GEF dataset. However, the Gaussian emission head method yields poor results in non-stationary datasets (COV). Compared with the method based on Cauchy emission heads using the same structure, the Gaussian distribution has significant shortcomings in dealing with data mutations. Lastly, for different types of epistemic uncertainty quantification methods, the dropout method lags behind our method by 5.5\%, 21\%, and 8.7\% on three datasets, respectively. Especially in COV datasets, such differences are even more pronounced. The possible reason for this is that the dropout method undermines the consistency between model training and inference. The strong regularization effect brought by the dropout method makes it difficult for the model to learn useful knowledge from data. While compared with other methods such as the Bayesian method, our method can significantly reduce the required computational time while maintaining performance advantages (shown in Section \ref{time_eval}). As for the MLP-based forecasting method, they generally can not provide competitive performance when considering external variables, especially DLinear. The main reason for this is that these time series models are mainly designed to model the temporal characteristic of long sequences, and their performance may be poor when external variables such as temperature need to be considered. Even though NBEATSX, which considers external variables to some degree, performs relatively well in the BDG2 dataset, it still lags behind our methods by about 12\% in the MAPE metric.

\begin{table}[htb]\Huge
\centering
\caption{MAPE comparison}
\renewcommand{\arraystretch}{1.7}
\setlength{\tabcolsep}{6pt}
\resizebox{0.5\textwidth}{!}{
\begin{tabular}{ccccccccc}
\hline\hline
                  & \multicolumn{8}{c}{MAPE}                                                                                                                                                                           \\ \hline
                  & GRU                 & deepAR              & MC dropout              & Bayesian             & Ensemble    & DLinear & NBEATSX-G(TS)                 & Ours Cauchy(Gaussian)         \\ \hline
GEF               & 3.49                & 3.58                & 3.48                 & 3.48                 & 3.44                     & 4.21 & 3.86(3.85)              & \textbf{3.29}(3.44)        \\
COV               & 2.34                & 2.04                & 2.44                 & 1.96                 & 1.98                     & 3.51 & 2.46(2.59)            & \textbf{1.91}(2.13)        \\
BDG2              & 12.61               & 12.46               & 12.13                & 12.30                & 12.08                  & 13.17 & 12.38(12.33)               & \textbf{10.83}(11.95)       \\ \hline\hline
\end{tabular}
}\label{MAPE}
\end{table}

\begin{table}[htb]\Huge
\centering
\caption{MAE comparison}
\renewcommand{\arraystretch}{1.7}
\setlength{\tabcolsep}{6pt}
\resizebox{0.5\textwidth}{!}{
\begin{tabular}{ccccccccccc}
\hline\hline
                  & \multicolumn{8}{c}{MAE}                                                                                                                                                                           \\ \hline
                  & GRU                 & deepAR              & MC dropout              & Bayesian             & Ensemble  & DLinear & NBEATSX-G(TS)                & Ours Cauchy(Gaussian)         \\ \hline
GEF               & 117.15              & 119.47              & 116.98               & 116.29               & 115.30               & 142.72   & 130.59(129.90)       & \textbf{110.44}(115.37)      \\
COV               & 24859.30            & 21844.09            & 25821.99             & 20806.48             & 21044.39         & 39292.92 & 27358.27(28819.07)                  & \textbf{20308.39}(22512.37)    \\
BDG2              & 12.82               & 12.76               & 12.25                & 12.74                & 12.48             & 12.85 & 12.28(12.31)              & \textbf{11.18}(12.38)       \\ \hline\hline
\end{tabular}
}\label{MAE}
\end{table}

Table \ref{prob_result} summarizes the probabilistic forecasting result for different methods. Among them, $25\%$, $50\%$, and $75\%$ of the Winker scores evaluated the probability estimates for non-conservative, general, and conservative situations, respectively. At the same time, we also used CRPS to evaluate the overall performance of probability estimation. From the results, our method maintains its advantages in most cases, only with a reduction of $0.38\%$ compared to the ensemble-based method when calculating CRPS metrics on the GEF dataset. From the perspective of the Winker Score, our method exhibits superior performance compared to other baselines in conservative, general, and non-conservative situations. In addition to the overall situation, Fig. \ref{building} also compares the forecasting accuracy of 10 building datasets. It can be seen that our method can provide advanced forecasting results and achieve significant improvements in average results, except for a few datasets with larger absolute load values. Fig. \ref{prob} provides a comparison between our method and the probability forecasts provided by other methods with a $75\%$ confidence interval. It can be seen that the $75\%$ interval is sufficient to cover the actual load value. However, compared to our method, the intervals given by other methods are slightly wider, indicating that other methods are too conservative and lead to a decrease in forecasting accuracy.

\begin{table*}[t]\Huge
\centering
\caption{Probabilistic result comparison}
\renewcommand{\arraystretch}{1.7}
\setlength{\tabcolsep}{6pt}
\resizebox{\textwidth}{!}{
\begin{tabular}{cccccccccccccc}
\hline\hline
                                    & \multicolumn{8}{c}{Metric}                                                                     & \multicolumn{5}{c}{Relative Improvement(\%)}                  \\ \cline{2-14} 
                                    & Dataset & GRU      & deepAR   & MC dropout & Bayesian & Ensemble & Ours Gaussian & Ours Cauchy & I\_GRU   & I\_deepAR & I\_dropout & I\_Bayesian & I\_ensemble \\ \hline
\multirow{3}{*}{Winker score(25\%)} & GEF     & 74.84    & 75.88    & 74.24      & 74.00    & 73.22    & 73.39        & \textbf{70.69}        & +5.54\%  & +6.83\%   & +4.78\%    & +4.47\%     & +3.45\%     \\
                                    & COV     & 15989.00  & 13966.90  & 16576.91   & 13310.35 & 13441.33  & 14398.77     & \textbf{13063.04}     & +18.30\% & +6.47\%   & +21.19\%   & +1.85\%     & +2.81\%     \\
                                    & BDG2    & 8.21     & 8.14     & 7.91       & 8.15     & 8.05     & 7.93         & \textbf{7.17}         & +12.66\% & +11.91\%  & +9.35\%    & +12.02\%    & +10.93\%    \\ \hline
\multirow{3}{*}{Winker score(50\%)} & GEF     & 96.76    & 96.66    & 94.38      & 94.08    & 93.16    & 93.22        & \textbf{90.94}        & +6.01\%  & +5.91\%   & +3.64\%    & +3.33\%     & +2.38\%     \\
                                    & COV     & 20975.60  & 18069.30  & 21758.42   & 17185.82 & 17334.50 & 18601.00     & \textbf{17112.44}     & +18.41\% & +5.29\%   & +21.35\%   & +0.43\%     & +12.81\%    \\
                                    & BDG2    & 10.68    & 10.48    & 10.44      & 10.56    & 10.38    & 10.28        & \textbf{9.36}         & +12.35\% & +10.68\%  & +10.34\%   & +11.36\%    & +9.82\%     \\ \hline
\multirow{3}{*}{Winker score(75\%)} & GEF     & 133.10   & 131.00   & 127.17     & 125.11   & 124.64   & 124.18       & \textbf{121.03}       & +9.06\%  & +7.61\%   & +4.82\%    & +3.26\%     & +2.89\%     \\
                                    & COV     & 29367.07 & 25408.52 & 31361.84   & 24022.10 & 23885.87 & 25988.62     & \textbf{23399.30}     & +20.32\% & +7.90\%   & +25.38\%   & +25.92\%    & +2.03\%     \\
                                    & BDG2    & 14.82    & 14.39    & 14.71      & 14.55    & 14.36    & 14.26        & \textbf{13.60}        & +8.26\%  & +5.48\%   & +7.54\%    & +6.52\%     & +5.29\%     \\ \hline
\multirow{3}{*}{CRPS}               & GEF     & 86.25    & 86.61    & 84.35      & 83.61    & \textbf{82.92}    & 83.07        & 83.24        & +3.48\%  & +3.89\%   & +1.31\%    & +0.44\%     & -0.38\%     \\
                                    & COV     & 18575.06 & 16263.42 & 19585.95   & 15438.46 & 15479.88 & 16705.65     & \textbf{15435.79}     & +16.90\% & +5.08\%   & +21.18\%   & +0.01\%     & +0.28\%     \\
                                    & BDG2    & 9.50     & 9.35     & 9.25       & 9.39     & 9.21     & 9.16         & \textbf{8.72}         & +8.21\%  & +6.73\%   & +5.72\%    & +7.13\%     & +5.32\%     \\ \hline\hline
\end{tabular}
}\label{prob_result}
\end{table*}

\begin{figure*}[t]
\centering
\includegraphics[width=0.8\textwidth]{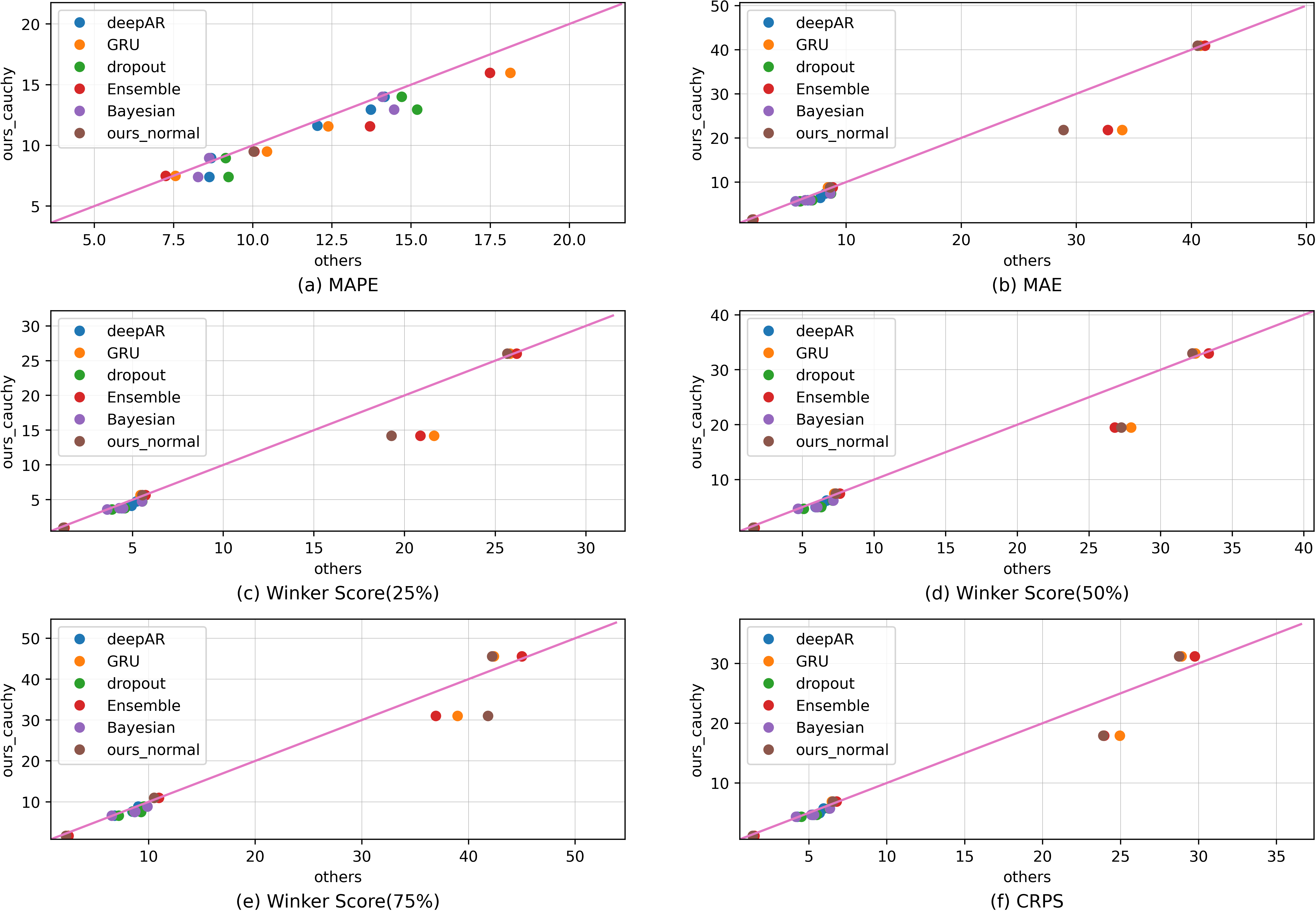}
\centering
\caption{Comparison of metrics on ten building datasets.}
\label{building}
\end{figure*}

\begin{figure}[t]
\centering
\includegraphics[width=0.5\textwidth]{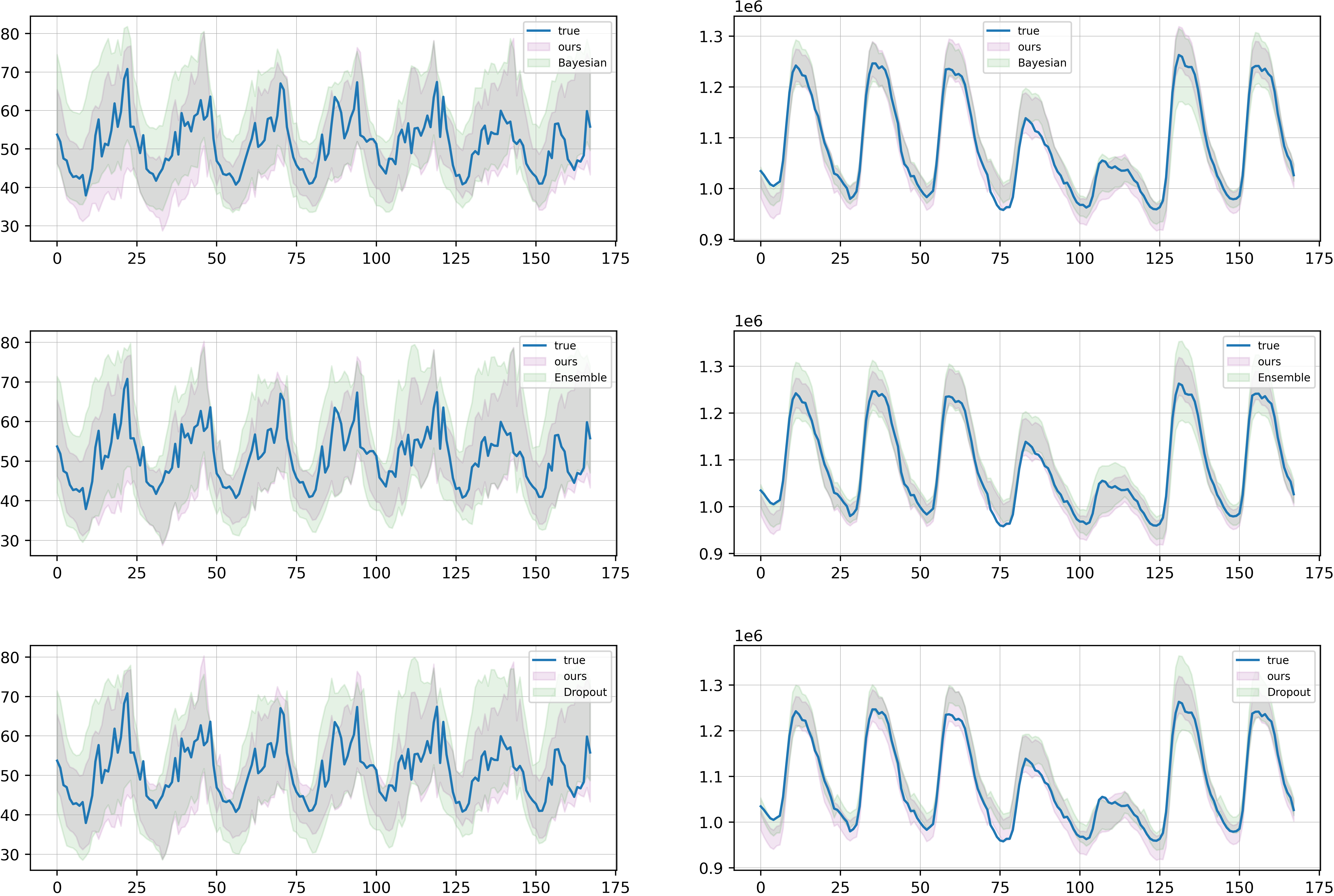}
\centering
\caption{Visualization of 75 $\%$ interval of two datasets.}
\label{prob}
\end{figure}

\subsubsection{Epistemic Uncertainty Estimation}
As shown in Fig. \ref{uncertainty}, we visualized the results of different models' estimation of the relationship between the episodic uncertainty and the number of training data on the COV dataset. The normal distribution is estimated by calculating the standard deviation, and the Cauchy distribution is estimated by calculating the quantile distance. Note that although the estimation results of different models cannot be directly compared, we can conclude the trends of each model. As the number of training data continues to increase, the model uncertainty of the dropout method has maintained fluctuating up and down, indicating that the dropout model does not learn enough about the data, resulting in the model being unable to cope with data mutations. For our model, as the amount of training data increases, the cognitive uncertainty of our model first rapidly decreases and then maintains a gentle downward trend. This indicates that our model learned a large amount of data features in the early stages, resulting in a rapid decrease in model uncertainty. Due to sudden changes in the data, there is a deviation between the training set data and the test set data. Therefore, even if the training data continues to increase, the model uncertainty does not show a significant downward trend. Compared with other methods that exhibit abnormal increases in uncertainty, our model's uncertainty estimation keeps decreasing as the data volume increases, which is more reasonable and likely to approach the true uncertainty~\cite{postels2019sampling}.

\begin{figure}[t]
\centering
\includegraphics[width=0.5\textwidth]{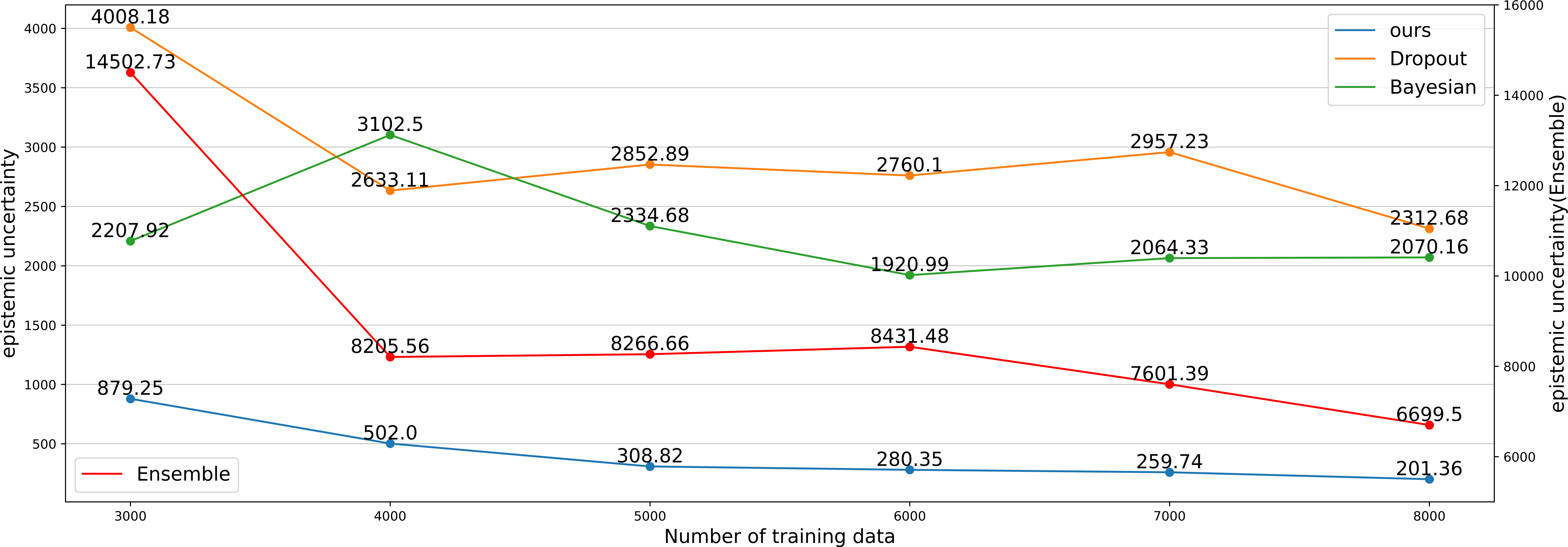}
\centering
\caption{Epistemic uncertainty estimation on COV dataset between different methods}
\label{uncertainty}
\end{figure}

\subsubsection{Robustness to outliers, noise, and data shift}
According to the definition of three types of noisy anomalies in Section \ref{aleatoric}, we perturb the training labels based on the basic training settings. Specifically, we set up three types of perturbations: 1) constant additive perturbations, which add 0.2 times the average value of all training labels on the real labels; 2) missing perturbation, replacing the real labels with the average of all training labels; 3) Gaussian perturbation, adding Gaussian noise with an expectation of 0 and a variance of 0.5 times the average of all training labels on the real labels. In implementation, we adopted two different noise ratios of 0.1 and 0.2. 

\begin{table*}[htb]\Huge
\centering
\caption{Forecasting robustness comparison among different noise}
\renewcommand{\arraystretch}{1.3}
\setlength{\tabcolsep}{6pt}
\resizebox{\textwidth}{!}{
\begin{tabular}{ccccccccc}
\hline\hline
Dataset                                    & Metric                               & Method     & Gaussian($\eta = 0.1$)                                & Gaussian($\eta = 0.2$)                                  & Missing($\eta = 0.1$)                                   & Missing($\eta = 0.2$)                                   & Constant additive($\eta = 0.1$)                         & Constant additive($\eta = 0.2$)                        \\ \hline
\multicolumn{1}{c|}{}                      &                                      & MC dropout &  3.503           &  3.513            &  3.673            &  4.361            &  3.788            &  4.679           \\
\multicolumn{1}{c|}{}                      &                                      & Bayesian   &  3.490           &  3.486            &  3.779            &  4.313            &  3.827            &  4.685           \\
\multicolumn{1}{c|}{}                      &                                      & Ensemble   &  3.436           &  3.437            &  3.560            &  4.159            &  3.626            &  4.567           \\
\multicolumn{1}{c|}{}                      &                                      & Student-T &  3.402           &  3.438            &  3.405            &  3.541            &  3.436            &  3.495           \\
\multicolumn{1}{c|}{}                      & \multirow{-5}{*}{MAPE}               & Ours       &  \textbf{3.314}  &  \textbf{3.331}   &  \textbf{3.361}   &  \textbf{3.415}   &  \textbf{3.386}   &  \textbf{3.418}  \\ \cline{2-9} 
\multicolumn{1}{c|}{}                      &                                      & MC dropout &  125.834         &  126.037          &  144.935          &  163.353          &  159.674          &  186.543         \\
\multicolumn{1}{c|}{}                      &                                      & Bayesian   &  125.389         &  125.415          &  146.816          &  166.455          &  156.882          &  188.738         \\
\multicolumn{1}{c|}{}                      &                                      & Ensemble   &  124.639         &  124.790          &  146.076          &  165.525          &  167.456          &  201.579         \\
\multicolumn{1}{c|}{}                      &                                      & Student-T &  125.419           &  126.995            &  123.320            &  137.679            &  \textbf{125.449}            &  \textbf{138.606}           \\
\multicolumn{1}{c|}{\multirow{-8}{*}{GEF}} & \multirow{-5}{*}{Winker Score(75\%)} & Ours       &  \textbf{121.090} &  \textbf{121.350} &  \textbf{122.827} &  \textbf{132.098} &  127.143 &  140.790 \\ \hline
\multicolumn{1}{c|}{}                      &                                      & MC dropout &  2.449           &  2.449            &  2.733            &  3.183            &  3.167            &  4.346           \\
\multicolumn{1}{c|}{}                      &                                      & Bayesian   &  1.962           &  1.962            &  2.211            &  3.047            &  3.718            &  5.151           \\
\multicolumn{1}{c|}{}                      &                                      & Ensemble   &  1.988           &  1.988            &  2.235            &  2.678            &  2.500            &  3.503           \\
\multicolumn{1}{c|}{}                      &                                      & Student-T &  2.272           &  2.272            &  2.253            &  2.252            &  2.312            &  2.240           \\
\multicolumn{1}{c|}{}                      & \multirow{-5}{*}{MAPE}               & Ours       &  \textbf{1.912}  &  \textbf{1.912}   &  \textbf{2.022}   &  \textbf{1.963}   &  \textbf{1.987}   &  \textbf{1.985}  \\ \cline{2-9} 
\multicolumn{1}{c|}{}                      &                                      & MC dropout &  31345.18         &  31342.77          &  36157.53          &  44386.80          &  41490.24          &  60022.86         \\
\multicolumn{1}{c|}{}                      &                                      & Bayesian   &  24034.02        &  24039.80          &  32077.69          &  40005.05          &  47984.88          &  59775.89         \\
\multicolumn{1}{c|}{}                      &                                      & Ensemble   &  23885.85         &  23885.84          &  26183.87          &  32233.43          &  28498.04          &  39111.33         \\
\multicolumn{1}{c|}{}                      &                                      & Student-T &  30143.72           &  30153.21            &  29322.96            &  29061.44            &  30336.88            &  29379.68           \\
\multicolumn{1}{c|}{\multirow{-8}{*}{COV}} & \multirow{-5}{*}{Winker Score(75\%)} & Ours       &  \textbf{23401.74} &  \textbf{23411.95} &  \textbf{24278.46} &  \textbf{24312.98} &  \textbf{24152.79} &  \textbf{24969.68} \\ \hline
\multicolumn{1}{c|}{}                      &                                      & MC dropout &  12.468           &  12.378            &  13.898           &  15.391            &  13.293            &  14.299           \\
\multicolumn{1}{c|}{}                      &                                      & Bayesian   &  12.275           &  12.277            &  13.785            &  16.295            &  12.888            &  14.248           \\
\multicolumn{1}{c|}{}                      &                                      & Ensemble   &  12.103          &  12.119            &  13.321            &  15.306            &  12.670            &  13.554           \\
\multicolumn{1}{c|}{}                      &                                      & Student-T &  11.146           &  11.231            &  11.558            &  12.360            &  11.483            &  12.447           \\
\multicolumn{1}{c|}{}                      & \multirow{-5}{*}{MAPE}               & Ours       &  \textbf{10.795}  &  \textbf{10.891}   &  \textbf{10.923}   &  \textbf{11.145}   &  \textbf{10.952}   &  \textbf{11.334}  \\ \cline{2-9} 
\multicolumn{1}{c|}{}                      &                                      & MC dropout &  14.331         &  14.582          &  15.063         &  15.674          &  14.738          &  14.959        \\
\multicolumn{1}{c|}{}                      &                                      & Bayesian   &  14.663         &  14.686          &  15.310          &  16.030          &  14.881          &  15.395         \\
\multicolumn{1}{c|}{}                      &                                      & Ensemble   &  14.405         &  14.440          &  15.157          &  15.770          &  14.823          &  15.344         \\
\multicolumn{1}{c|}{}                      &                                      & Student-T &  13.805           &  13.783           &  14.294            &  14.509            &  14.286            &  14.491           \\
\multicolumn{1}{c|}{\multirow{-8}{*}{BDG2}} & \multirow{-5}{*}{Winker Score(75\%)} & Ours       &  \textbf{13.561} &  \textbf{13.671} &  \textbf{14.063} &  \textbf{14.469} &  \textbf{13.849} &  \textbf{14.363} \\ \hline\hline
\end{tabular}
}\label{noise_result}
\end{table*}

Table \ref{noise_result} reports different forecasting results which are trained on the perturbed training set and then tested on the test set. In addition to the uncertainty quantification methods mentioned above, we replace the emission head of deepAR with student-T distribution to examine the robustness of heavy-tailed distribution to data noise and outliers. Note that the Student-T distribution is not stable. Therefore, we can not directly combine it with methods such as MC dropout to quantify epistemic uncertainty. 
Overall, the higher the noise ratio, the worse the test results of the model will be. From the perspective of different types of noise, Gaussian noise has the least impact on the test results. This is because the expectation of Gaussian noise is 0. When there are enough samples, the neural network has a strong resistance to this type of noise. However, the other two types of noise have a significant impact on the training of the model. Replacing the training label with the average of the entire label to simulate missing values has a generally less negative impact on model training than adding a perturbation to the label, which is to simulate constant noise. For different probabilistic methods, our approach, as well as the heavy tail student-T distribution, is far less affected by the training noise compared to other probabilistic forecasting methods. This superiority is more pronounced in constant noise and missing value scenarios, especially when the ratio of noise is large ($\eta = 0.2$), our method can still maintain almost consistent results with clean labels, with a performance loss of about $\mathbf{4\%}$ in deterministic forecasting while $\mathbf{16\%}$ in probabilistic forecasting. For other methods, the best-performing method with Gaussian distribution has a performance loss ratio of up to $\mathbf{33\%}$ and $\mathbf{48\%}$ in different types of forecasting, respectively. This phenomenon precisely explains why heavy-tail distribution methods perform better than other probabilistic forecasting methods in building-level data, which have relatively poor data quality and high noise. Furthermore, it is worth noting that our method has consistent performance advantages in both deterministic and probabilistic forecasting in different types of noise.
On the contrary, other methods, such as the ensemble method, although suboptimal in deterministic forecasting, perform significantly worse than other competing methods in probabilistic forecasting scenarios with constant noise. As for the comparison between our method and the Student-T distribution, our method slightly outperforms the Student-T distribution on the GEF and Building datasets. On the COV dataset, the advantages of our method become very obvious, mainly due to the significant shifts between the training and test data in the COV dataset (not just noise and outliers). This corresponds to situations with high epistemic uncertainty. With the stability of Cauchy distribution and our proposed diffusion structure, it can reduce the harm of data shifts to the model in such situations. However, for Student-T, due to its lack of stability, it is difficult to directly combine it with existing methods for quantifying epistemic uncertainty, which leads to its poor performance on COV datasets.

\subsubsection{Ablation Study}
To clarify the effectiveness of the model components, we will list additional ablation results in Table \ref{ablation}. Our vanilla model here is actually the DeepAR model. From the comparison results, it can be seen that our method leads the Vanilla model in all data and metrics. As shown in Fig. \ref{distribution_compare}, even though the confidence intervals of both distributions can roughly contain the true values, forecasts based on Gaussian distributions may exhibit abnormally large confidence intervals in certain areas, resulting in lower forecasting accuracy.

\begin{figure}[htb]
\centering
\includegraphics[width=0.5\textwidth]{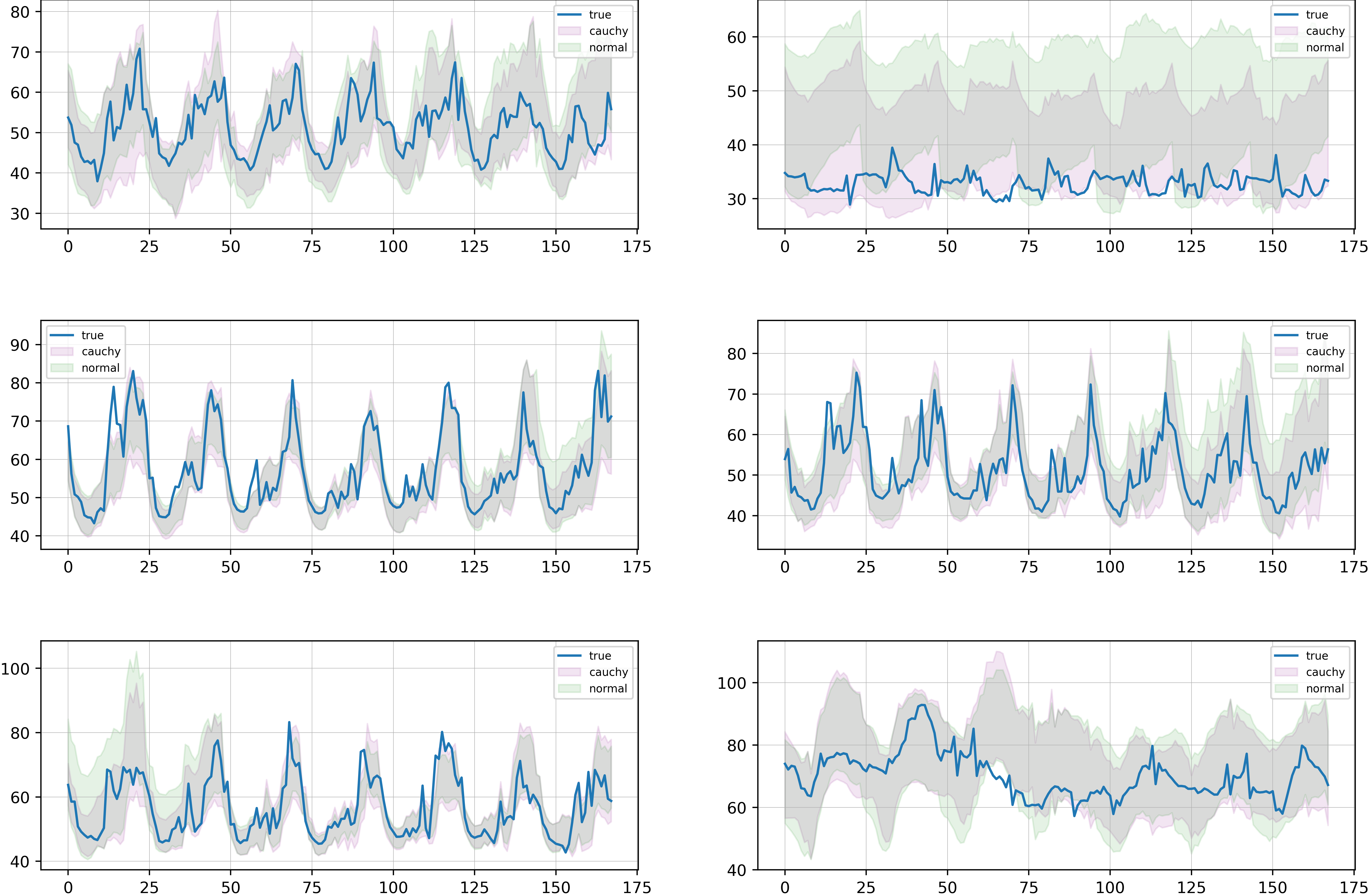}
\centering
\caption{Visualization of comparison between Gaussian and Cauchy. }
\label{distribution_compare}
\end{figure}

\begin{table}[htb]
\centering
\caption{Ablation study}
\begin{threeparttable}
\begin{tabular}{c|c|ccc}
\hline\hline
Dataset & Metric                             & o/o      & d/o            & d/c               \\ \hline
GEF     & \multirow{3}{*}{MAPE}              & 3.58     & 3.44           & \textbf{3.29}     \\
COV     &                                    & 2.04     & 2.13           & \textbf{1.91}     \\
BDG2    &                                    & 12.46    & 11.95          & \textbf{10.83}    \\ \hline
GEF     & \multirow{3}{*}{MAE}               & 119.47   & 115.37         & \textbf{110.44}   \\
COV     &                                    & 21844.0  & 22512.37       & \textbf{20308.3}  \\
BDG2    &                                    & 12.76    & 12.38          & \textbf{11.18}    \\ \hline
GEF     & \multirow{3}{*}{Winker Score(25\%)} & 75.88    & 73.39          & \textbf{70.69}    \\
COV     &                                    & 13966.92 & 14398.77       & \textbf{13063.04} \\
BDG2    &                                    & 8.14     & 7.93           & \textbf{7.17}     \\ \hline
GEF     & \multirow{3}{*}{Winker Score(50\%)} & 96.66    & 93.22          & \textbf{90.94}    \\
COV     &                                    & 18069.34 & 18601.00       & \textbf{17112.44} \\
BDG2    &                                    & 10.48    & 10.28          & \textbf{9.36}     \\ \hline
GEF     & \multirow{3}{*}{Winker Score(75\%)} & 131.00   & 124.18         & \textbf{121.03}   \\
COV     &                                    & 25408.52 & 25988.62       & \textbf{23399.30} \\
BDG2    &                                    & 14.39    & 14.26          & \textbf{13.60}    \\ \hline
GEF     & \multirow{3}{*}{CRPS}              & 86.61    & \textbf{83.07} & 83.24             \\
COV     &                                    & 16263.42 & 16705.65       & \textbf{15435.79} \\
BDG2    &                                    & 9.35     & 9.16           & \textbf{8.72}     \\ \hline\hline
\end{tabular}
\begin{tablenotes}
\footnotesize
\item[1] o/o: without diffusion structure and use the Gaussian distribution. 
\item[2] d/o: with diffusion structure and use the Gaussian distribution. 
\item[3] d/c: with diffusion structure and use the Cauchy distribution.
\end{tablenotes}
\end{threeparttable}
\label{ablation}
\end{table}

\subsubsection{Time Consuming Comparison}\label{time_eval}
Although uncertainty estimation based on methods such as ensemble and Bayesian neural networks can improve forecasting accuracy (compared to simple DeepAR), the additional computational costs cannot be ignored. Assume that the parameter quantity of the GRU module is $p$, Table \ref{Time} shows the complexity and the time required for one training on ten building datasets, where $\gamma_1$ and $\gamma_2$ represent the number of additional parameters of the network that are used to capture posterior distribution and $K$ represents the number of the models that are trained in ensemble strategy. Since the Bayesian method requires modeling the distribution of each parameter, the extra complexity will be related to the number of the original model parameters while our methods just need a small amount of extra parameters. Because the deep ensemble approach necessitates training numerous separate neural networks, its training time increases linearly when compared to standard DeepAR. Similarly, the Bayesian neural network technique treats every parameter of the NN as a random variable with a minimum order of magnitude of a million. Therefore, it is no wonder that these methods require much more computational resources than other methods. The dropout approach is less computationally intensive than other uncertainty estimating techniques, but it might result in inconsistent training and inference processes, which can quickly reduce forecasting accuracy~\cite{li2019understanding} (also shown in the poor performance in the COV dataset). When compared to the Bayesian and Ensemble approaches, our method does not significantly differ from the simple model, even though it takes more computing time than the simple deepAR model because of the addition of diffusion structures.

\begin{table}[htb]\Huge
\centering
\caption{Time consumed by different methods.}
\resizebox{0.5\textwidth}{!}{
\begin{tabular}{cccccccc}
\hline
\hline
Method  & GRU & DeepAR & Dropout & Ensemble & Bayesian   & Ours Gaussian & Ours Cauchy \\ \hline
Time(s) & 75.91 & 107.83 & 106.62  & 8880.56  & 20064.76 & 242.01        & 197.71       \\ \hline
Complexity & $O(p)$ & $O(p)$   & $O(p)$    & $O(Kp)$    & $O(p\gamma_1)$ & $O(p+\gamma_2)$     & $O(p+\gamma_2)$ \\ \hline\hline
\end{tabular}
}\label{Time}
\end{table}

\section{Conclusions}\label{Conclusion}
This paper proposes a novel method for estimating uncertainty and applying it to load forecasting. On the one hand, we employ a robust heavy-tailed Cauchy distribution to encapsulate our forecasting model, reducing the model's sensitivity to outliers and sudden changes. This approach ensures the model's training stability while estimating aleatoric uncertainty. On the other hand, we propose a novel empirical estimation method based on the diffusion model. Unlike traditional methods focusing on the model's parameters, our approach utilizes the hidden state in Seq2Seq to estimate uncertainty, significantly reducing the training time and providing superior probabilistic forecasts. In future work, we will consider combining this estimation method with online learning to achieve higher forecasting accuracy through adaptive model updates based on uncertainty estimation.

\bibliographystyle{IEEEtran}


\vspace{-0.5cm}
\begin{IEEEbiography}
[{\includegraphics[width=1in,height=1.25in,clip,keepaspectratio]{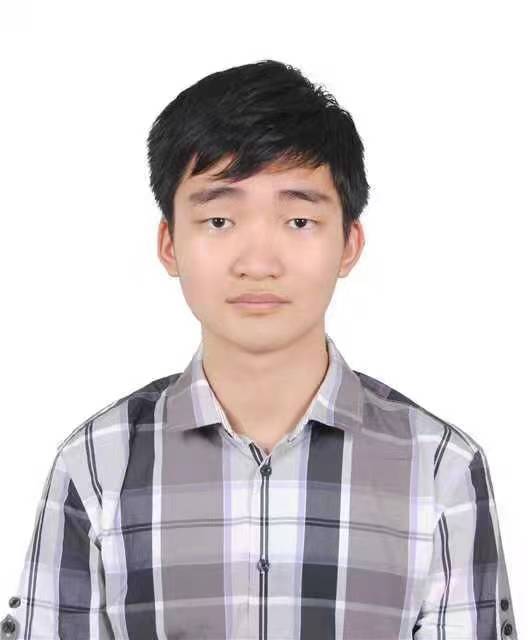}}]{Zhixian Wang} received the B.S. degree in Statistics from the University of Science and Technology of China, Hefei, China, in 2022. He is now pursuing a Ph.D. degree in Electrical and Electronic Engineering at the University of Hong Kong. His current research interests include energy forecasting and explainable AI in smart grids.
\end{IEEEbiography}
\begin{IEEEbiography}
[{\includegraphics[width=1in,height=1.25in,clip,keepaspectratio]{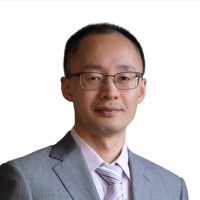}}]{Qingsong Wen} is the Head of AI Research \& Chief Scientist at Squirrel Ai Learning. He holds a Ph.D. in Electrical and Computer Engineering from Georgia Institute of Technology, USA. He has published over 100 top-ranked AI conference and journal papers, had multiple Oral/Spotlight Papers at NeurIPS, ICML, and ICLR, had multiple Most Influential Papers at IJCAI, received multiple IAAI Deployed Application Awards at AAAI, and won First Place in SP Grand Challenge at ICASSP.  Currently, he serves as Co-Chair of the Workshop on AI for Time Series, Workshop on AI for Education, and Workshop on AI Agent for Information Retrieval at KDD, AAAI, IJCAI, etc. He also serves as Associate Editor for Neurocomputing and IEEE Signal Processing Letters, and regularly serves as Area Chair of the top conferences including NeurIPS, KDD, ICASSP, etc. His research focuses on AI for time series, AI for education, and general machine learning.
\end{IEEEbiography}
\begin{IEEEbiography}
[{\includegraphics[width=1in,height=1.25in,clip,keepaspectratio]{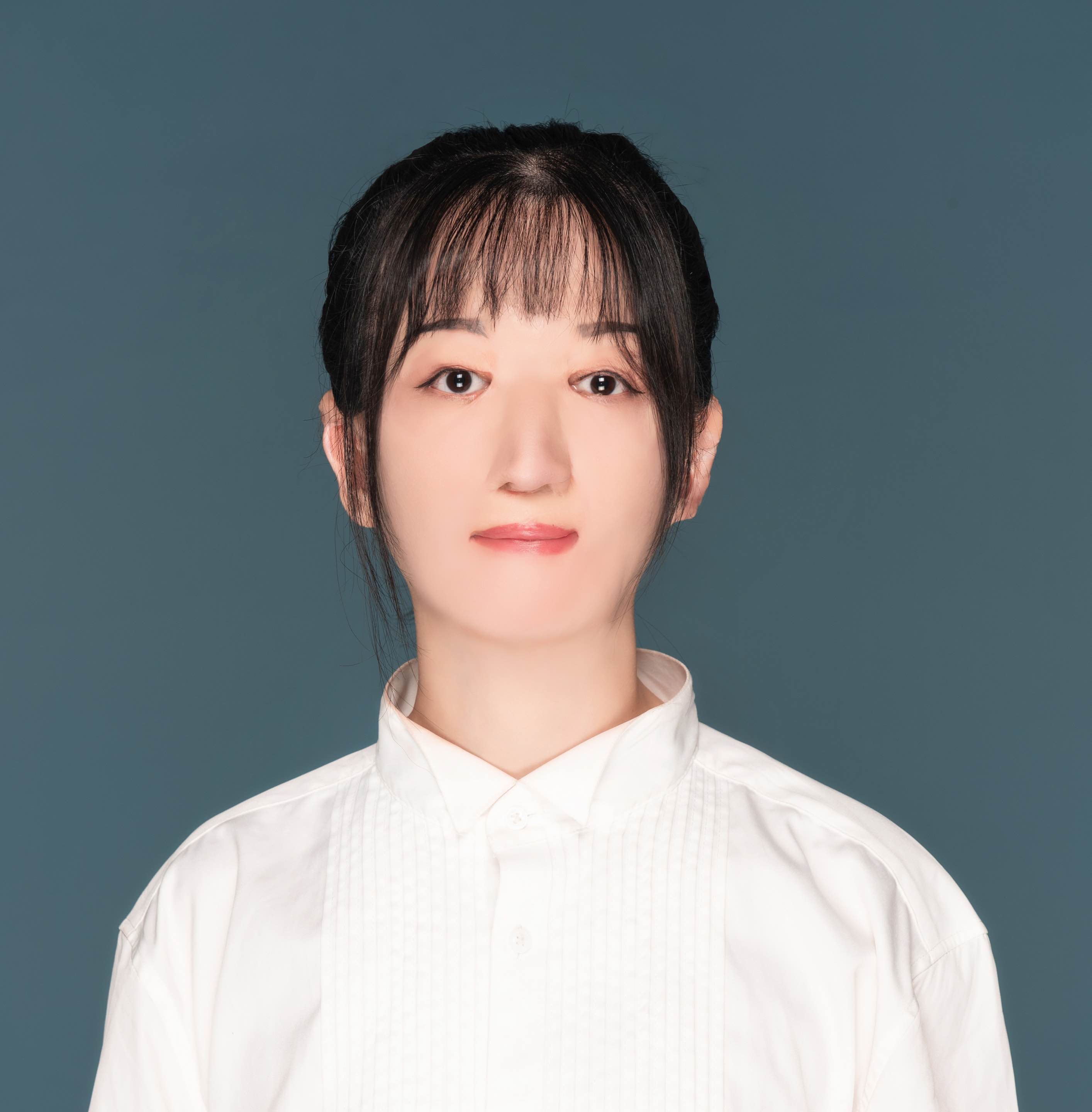}}]{Chaoli Zhang} received the B.S. degree in information security from Nankai University, China, in 2015, and the Ph.D. degree in computer science and engineering from Shanghai Jiao Tong University, China in 2020. She is currently a lecturer at the School of Computer Science and Technology, at Zhejiang Normal University. She engaged in research at Alibaba DAMO Academy for nearly three years. Her research interests include time series analysis, algorithmic game theory and mechanism design, and networking. She won the gold prize in ICASSP-SPGC root cause analysis for wireless network fault localization in 2022. She was the recipient of the Google Anita Borg Scholarship 2014 and the AAAI/IAAI Innovative Deployed Application Award 2023. 
\end{IEEEbiography}
\begin{IEEEbiography}[{\includegraphics[width=1in,height=1.25in,clip,keepaspectratio]{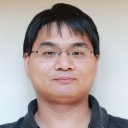}}]{Liang Sun} received the B.S. degree in computer science from Nanjing University and the Ph.D. degree from in computer science, Arizona State University. He is currently a Senior Staff Engineer/Engineering Director with DAMO Academy-Decision Intelligence Laboratory, Alibaba Group, where he is working on building a data-driven decision-making cycle in automated business analysis, including data monitoring, insights discovery, diagnosis and root cause analysis, action suggestion, and explainability of the cycle, with an emphasis on time series data. He has over 50 publications, including two books in the fields of machine learning and data mining. His work on dimensionality reduction won the KDD 2010 Best Research Paper Award Honorable Mention and won Second Place in the KDD Cup 2012 Track 2 Competition. He also won First Place in the 2022 ICASSP Grand Challenge (AIOps in Networks) Competition and received the AAAI/IAAI 2023 Deployed Application Award.
\end{IEEEbiography}
\begin{IEEEbiography}
[{\includegraphics[width=1in,height=1.25in,clip,keepaspectratio]{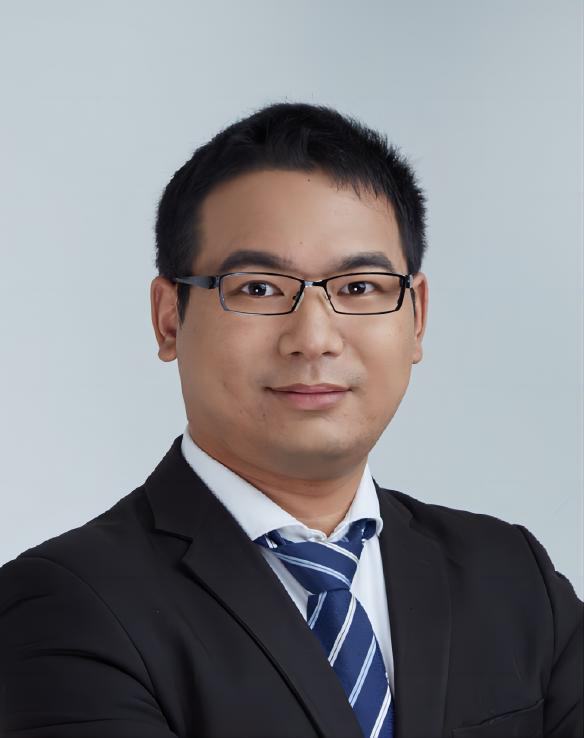}}]{Yi Wang} received the B.S. degree from Huazhong University of Science and Technology in June 2014, and the Ph.D. degree from Tsinghua University in January 2019. He was a visiting student at the University of Washington from March 2017 to April 2018. He served as a Postdoctoral Researcher in the Power Systems Laboratory, at ETH Zurich from February 2019 to August 2021. 

He is currently an Assistant Professor at the Department of Electrical and Electronic Engineering at the University of Hong Kong. His research interests include data analytics in smart grids, energy forecasting, multi-energy systems, Internet-of-things, and cyber-physical-social energy systems.
\end{IEEEbiography}
\vfill
\end{document}